\definecolor{blue}{HTML}{0000EE}
\definecolor{black}{HTML}{000000}
\def\RR{\mathds{R}}
\def\EE{\mathds{E}}
\def\DDD{\mathscr{D}}
\def\ZZZ{\mathscr{Z}}
\def\ind{\mathds{1}}
\newcommand{\cmmnt}[1]{\ignorespaces}
\numberwithin{equation}{section}
\newtheorem{lem}{Lemma}
\newtheorem{cond}{Condition}
\newcommand{\jeff}[1]{{#1}}
\DeclareMathOperator*{\argmax}{arg\,max}
\newcommand{\blind}{1}
\begin{document}
	
	\if1\blind
	{
		\title{\Large{\textbf{The Infinitesimal Jackknife and Combinations of Models}}}
		\author{
			\bigskip
			Indrayudh Ghosal\footnote{Denotes equal lead contributions.} $\;^1$, Yunzhe Zhou\footnotemark[1] $\;^2$, and Giles Hooker$\;^3$ \\
			\normalsize{\textit{$^1$ Department of Statistics and Data Science, Cornell University}} \footnote{Currently affiliated with Amazon.com Inc. All substantial contribution to this paper was prior to joining Amazon.com Inc.} \\
			\normalsize{\textit{$^2$ Department of Biostatistics, UC Berkeley}} \\
			\normalsize{\textit{$^3$ Department of Statistics, UC Berkeley}} \footnote{This work was partly conducted while Giles Hooker was visiting the Research School of Finance, Actuarial Studies and Statistics at the Australian National University.} 
		}
		\date{}
		\maketitle
	} \fi
	
	\if0\blind
	{
		\title{\Large{\textbf{The Infinitesimal Jackknife and Combinations of Models}}}
		\author{
			\bigskip
			\vspace{0.5in}
		}
		\date{}
		\maketitle
	} \fi

	\baselineskip=20pt
	\begin{abstract}
		The Infinitesimal Jackknife is a general method for estimating variances of parametric models, and more recently also for some ensemble methods. In this paper we extend the Infinitesimal Jackknife to estimate the covariance between any two models. This can be used to quantify uncertainty for combinations of models, or to construct test statistics for comparing different models or ensembles of models fitted using the same training dataset. Specific examples in this paper use boosted combinations of models like random forests and M-estimators. We also investigate its application on neural networks and ensembles of XGBoost models. We illustrate the efficacy of variance estimates through extensive simulations and its application to the Beijing Housing data, and demonstrate the theoretical consistency of the Infinitesimal Jackknife covariance estimate. 
	\end{abstract}
	
	\noindent{\bf Key Words:} 
	Infinitesimal Jackknife; M-estimator; Ensemble; Kernel; V-statistics; Random Forests.
	
	\baselineskip=22pt

	
	\section{Introduction} \label{sec:intro}
	This paper focuses on methods that enable uncertainty quantification for combinations of models that are not immediately comparable. The jackknife is a resampling technique that can be used for bias and variance estimation [\cite{tukey1958bias}]. Here the estimator is  constructed by repeatedly calculating the statistic, each time leaving one observation from the sample out and averaging all estimates. As opposed to the ordinary jackknife, the Infinitesimal Jackknife (IJ) looks at the behavior of the statistic after giving an infinitesimally more weight to one observation (and down-weighting the rest to keep the total weight constant) [\cite{jaeckel1972infinitesimal}]. In recent work, the IJ has been widely used to quantify uncertainty for ensemble methods, most particularly for variance estimation for the predictions of random forests [\cite{wager2014confidence}, \cite{athey2019generalized}]. In addition, recent work has examined estimating the covariance between two random forests - specifically when they are used as base learners in a boosted setting [\cite{ghosal2020boosting}]. However, the definition of the IJ as proposed in \cite{efron1982jackknife} is much more general and can also be applied to a wide variety of parametric models. 
	
	This paper focuses on the application of IJ methods to combinations of models. Within machine learning, combinations of models arise from mixture models and from boosting methods [as explored in \cite{ghosal2020boosting}]. They may also arise if we wish to compare model predictions and provide uncertainty estimates for their difference. This may be used for assessing goodness of fit, or when assessing the impact of the choice of model or hyper-parameters.

	In this paper we demonstrate the use of the IJ for defining a general covariance between the predictions of any two models for which IJ estimates are available individually. We first derive IJ variance estimates for the predictions for three general model classes: ensemble models where each member of the ensemble is obtained using a subsample of the data, M-estimators based on optimizing a finite dimensional parameter vector, and kernel methods that use local averaging methods. We then demonstrate the use of IJ to explore combinations of these models, either to compare predictions between classes, or to combine models in a boosting-type procedure that generalizes the ideas in \cite{ghosal2020boosting}. We also examine extensions to small neural networks treated as M-estimators (in which find that the framework rapidly breaks down), and to ensembles of XGboost models where IJ estimates continue to perform well. 
	
	The paper is organized as follows. Because we will be concerned with the sample properties of confidence intervals and tests, we first introduce a common simulation framework and terminology for all our experiments in \cref{sec:sim_frame}. Then we give the definition of the IJ in \cref{sec:ij}, with the specific examples of  M-estimators, random forests, and kernel methods. In \cref{sec:boost}, we show how to use the IJ to calculate the confidence interval for the general boosting methods and local model modifications  (the latter mostly used for bias corrections). These are natural extensions of \cite{ghosal2020boosting} and \cite{ghosal2021generalised}. Then we demonstrate how it can be used to compare models in \cref{sec:modelcomp} with generalised linear models as a special case of M-estimators. In \cref{sec:nn}, we also explore how to derive the IJ for small neural network by regarding it as a M-estimator. This can make it possible to compare neural network with other models. We next discuss the general V-statistics framework to calculate the IJ for ensemble models and use an example of the ensemble XGBoost for illustrations in \cref{sec:v-stat}. Finally, we apply these methods to Beijing House Data in \cref{sec:real}. All technical theorems and proofs are provided in the Supplementary Appendix.

	\section{Simulation Framework} \label{sec:sim_frame}
	A key component of our contribution is to examine the many ways in which models may be combined. In order to assess the finite sample performance of our methods, we will use the following common simulation framework and terminology for all our experiments. 
	In our studies, we consider only the regression problems with response $Y|X=x \sim N(\eta(x),1)$ and $X \sim U[-1,1]^6$ with the following generative models (but we note that the theory we produce readily extends to an exponential family response):
	\begin{itemize}
		\item \textbf{Friedman}: $\eta(x) = 10\sin(\pi x_1 x_2) + 20(x_3 - \frac12)^2 + 10x_4 + 5x_5$.
		\item \textbf{Linear}: $\eta(x) = x_1 + x_2 + x_3 + x_4$
		\item \textbf{Constant}: $\eta(x) = 2$
	\end{itemize}
	
	For each simulation, we fix a set of 5 (for model comparisons) or 100 (for coverage calculations) query points at which our estimated function will be evaluated.  We then generate 200 data sets, each of 1000 points uniformly distributed on $[-1,1]^6$. This allows us to obtain a distribution of predictions at each of the query points, along with the covariance of predictions between them. The following metrics will be used throughout:
	\begin{itemize}
		\item \textbf{Coverage of Expectation  (CoE)}: Let $\hat{f}_{ij}$ denote the prediction at the $i$th query point and $j$th replication.  $\hat{V}_{ij}$ is its variance estimate. Then we construct 95\% confidence intervals by $\hat{f}_{ij} \pm \Phi^{-1}(0.975) \times \sqrt{\hat{V}_{ij}}$. For $j=1,2,...,200$ in each fixed query point $i$, we calculate a coverage probability by checking whether the expected prediction value (approximated by $\frac{1}{200}\sum_{j=1}^{200} \hat{f}_{ij}$) falls into this interval. We generate the boxplots and violin plots for the coverage probability of all the query points. \jeff{We use this to assess the accuracy of our estimates of variance. This is different from estimating model bias for which we use Coverage of Target defined below.}
		\item \textbf{Coverage of Target (CoT)}: Borrowing the notations above, we instead calculate the coverage probability by checking whether the true expectation $E(Y|X)$ value falls into the constructed 95\% confidence intervals. 
		\item \textbf{Coverage of Reproduction (CoR)}: In \cref{sec:real} we use and evaluate intervals that cover the value of an alternative $\hat{f}_{ij}$ obtained from an independent data set across 95\% of replications. 
	\end{itemize}
	We also summarize all the models that is considered in the simulations:
	\begin{itemize}
		\item \textbf{Random Forests}: Implemented by using \textsf{sklearn} package in python [\cite{scikit-learn}]. We vary the number
		of maximum depth in the range of $\{3, 5, 7, 9\}$ and also consider a random forest with trees grown to full depth. We set the number of trees to be 1000 or 5000 \jeff{and use a subsample size of 200, taken with replacement.}
		\item \textbf{Generalized Linear Models(GLM)}: We consider linear regression in this paper.
		\item \textbf{Neural Networks}: Implemented by using \textsf{Tensorflow2.0} package in python [\cite{abadi2016tensorflow}]. We vary the number
		of hidden units in the range of $\{1,3,5,10,20\}$ and consider ReLU and Sigmoid activation functions.
		\item \textbf{XGBoost}: We use the \textsf{xgboost} package for implementation and use the default hyperparameters [\cite{chen2016xgboost}].
	\end{itemize}

	\section{Infinitesimal Jackknife} \label{sec:ij}
	The IJ is a general-purpose framework for estimating the variance of any statistic. \cite{efron1982jackknife} defines the IJ for an estimate of the form $\hat{\theta}(P^0)$, where $P^0$ is the uniform probability distribution over the empirical dataset. We re-write our estimate more generally as $\hat{\theta}(P^*)$ where $P^*$ is any re-weighting of the empirical distribution. This is then approximated by the hyperplane tangent to the surface $\hat{\theta}(P^*)$ at the point $P^* = P^0$, i.e. $\hat{\theta}(P^*) \approx \hat{\theta}_{\text{TAN}}(P^*) = \hat{\theta}(P^0) + (P^*-P^0)^\top U$, where $U$ is a vector of the directional derivatives given by
	$$
	U_i = \lim_{\epsilon \to 0} \frac{\hat{\theta}(P^0 + \epsilon(\delta_i - P^0)) - \hat{\theta}(P^0)}{\epsilon}, i = 1,\ldots,n
	$$
	where $\delta_i$ the $i$th coordinate vector. Under a suitable asymptotic normal distribution for $P^*-P^0$ we can obtain the variance of $\hat{\theta}_{\text{TAN}}(P^*)$ to be 
	$$\text{Var}[\hat{\theta}_{\text{TAN}}(P^*)] = \frac{1}{n^2} \sum_{i=1}^n U_i^2$$
	This is the IJ variance estimate for the estimator $\hat{\theta}(P^0)$. Below we derive the form of IJ estimators for three specific models.
	
	\subsection{IJ for M-estimators} \label{subsec:ij_m}
	M-estimators are a broad class of extremum estimators for which the objective function is a sample average [\cite{hayashi2000extremum}]. Consider a data space $\ZZZ \subseteq \RR$ and the training data $Z_1,Z_2,\cdots,Z_n$ as i.i.d copies of $Z \in \ZZZ$. Then we define the M-estimator by
	$$
	\hat{\theta} = \argmax_{\theta \in \Theta} \EE_{\hat{\DDD}} [m(\theta, Z)]
	$$
	where $\EE_{\hat{\DDD}}$ is the expectation with regard to the empirical distribution $\hat{\DDD}$ over the training data, i.e., $\EE_{\hat{\DDD}} [m(\theta, Z)] = \frac1{n} \sum_{i=1}^n m(\theta, Z_i)$. $\Theta \subseteq \RR^p$ is the parameter space and $m(\theta, Z)$ is a well-behaved function from $\Theta \times \ZZZ$ to $\RR$. Since we are interested in the predictions of M-estimators, we denote by $\eta(\theta, x)$ the prediction function corresponding to any parameter $\theta \in \Theta$ and query point $x \in \ZZZ^d$. Then the directional derivatives of the model predictions are given by
	\begin{align} \label{M_ij}
		U_i(x) = -\nabla_\theta \eta(\hat{\theta}, x)^\top \left[\EE_{\hat{\DDD}} [\nabla_\theta^2 m(\hat{\theta},Z)]\right]^{-1} \nabla_\theta m(\hat{\theta}, Z_i)
	\end{align}
	Thus, the variance of the model predictions can be given by
	\begin{align*} 
		\text{Var}(\eta(\hat{\theta},x)) &= \frac{1}{n^2} \sum_{i=1}^n U^2_i(x) \\
		&= \frac{1}{n^2} \sum_{i=1}^n \nabla_\theta \eta(\hat{\theta}, x)^\top \left[\EE_{\hat{\DDD}} [\nabla_\theta^2 m(\hat{\theta},Z)]\right]^{-1} \times \\
		&\qquad\qquad \nabla_\theta m(\hat{\theta}, Z_i) \nabla_\theta m(\hat{\theta}, Z_i)^\top \left[\EE_{\hat{\DDD}} [\nabla_\theta^2 m(\hat{\theta},Z)]\right]^{-1} \nabla_\theta \eta(\hat{\theta}, x)^\top \\
		&= \nabla_\theta \eta(\hat{\theta}, x)^\top \left[\EE_{\hat{\DDD}} [\nabla_\theta^2 m(\hat{\theta},Z)]\right]^{-1}  \nabla_\theta \eta(\hat{\theta}, x)^\top   \\
		&\approx \nabla_\theta \eta(\theta^*, x)^\top \left[\EE_{\DDD} [\nabla_\theta^2 m(\theta^*,Z)]\right]^{-1}  \nabla_\theta \eta(\theta^*, x)^\top
	\end{align*} 
	\jeff{where $\theta^* = \argmax_{\theta \in \Theta} \EE [m(\theta, Z)]$ and $\EE_{\DDD}$ is the expectation with regard to the distribution of $Z$}. The approximation above holds for large samples due to the law of large numbers, the consistency of M-estimators and the continuity of $\eta$. This results in the usual plug-in values for the sandwich form of the asymptotic variance of $\eta(\hat{\theta},x)$.
	
	The derivation of these directional derivatives and consistency of corresponding variance estimate is shown in \cref{sec:MestIJ} and \cref{sec:IJcons}. In \cref{sec:nn}, we will further discuss how to formalize a small neural network model as an M-estimator and calculate its directional derivatives. 
	
	\subsection{IJ for Random Forests} \label{subsec:rf_bias}
%
	
	In \cite{efron2014estimation} directional derivatives are calculated for any general ensemble model. 
	
	Specifically, given the observations $z_1,z_2,\cdots,z_n$, we define the reweighted empirical probability distribution as
	$$\hat{F}^*: \text{mass} \; P^*_i \; \text{on} \; z_i, \quad  i=1,2,\cdots,n$$
	where the resampling vector $P^* \in \Big \{ \Big(P^*_1,P^*_2,\cdots,P^*_n \Big) \Big|P^*_i \geq 0, \sum_{i=1}^n P^*_i = 1 \Big \}$. In particular, $P^0 = (\frac{1}{n},\frac{1}{n},\cdots,\frac{1}{n})$.  We define the multinomial expectation of $\theta(P^*)$ as
	\begin{align*}
		S(P) = \mathbb{E}_*{[\hat{\theta}(P^*)]}, \quad P^*\sim \frac{\text{Mult}_n(n,P)}{n}, 
	\end{align*}
	for any resampling vector $P$ and  $\frac{\text{Mult}_n(n,P)}{n}$ is a rescaled multinomial distribution, that is $n$ independent draws on $n$ categories each having probability $1/n$. $\hat{\theta}(P^*)$ is the prediction of the random forest and $\mathbb{E}_*$ indicates expectation under the multinomial distribution of $P^*$. So $S(P^0)$ is the ideal smoothed bootstrap estimate.  Define the directional derivative
	\begin{align*}
		U_i = \lim_{\substack{\epsilon \to 0}} \frac{S(P^0 + \epsilon(\delta_i - P^0)) - S(P^0)}{\epsilon}, \quad i=1,\cdots,n 
	\end{align*}
	where $\delta_i$ the $i$th coordinate vector.  Formula (3.20) of \cite{efron2014estimation} gives
	\begin{align*}
		S(P^0 + \epsilon(\delta_i - P^0)) = S(P^0) + n \cdot (cov_*(N_{i,*}, T_*(x)))^2 + o(\epsilon)
	\end{align*}
	where $N_{i,*}$ is the number of times the $i$\textsuperscript{th} datapoint is in the ensemble, $T_*$ is the individual model in the ensemble and $cov_*$ represents the ideal covariance under the bootstrap sampling. This yields 
	$$
	U_i(x) = n \cdot cov_*(N_{i,*}, T_*(x))
	$$
	In applications, we generate finite number of bootstrap replications and use $$
	U_i(x) = n \cdot cov_b(N_{i,b}, T_b(x))
	$$ as the estimate instead. 
	
	For random forests the covariance is over $b = 1, \dots, B$ trees, $N_{i,b}$ is the number of times the $i$\textsuperscript{th} training data point is included in the sample used to train the $b$\textsuperscript{th} tree and $T_b$ is the $b$\textsuperscript{th} tree kernel. In \cref{sec:v-stat}, we will give an example of using XGBoost as a component in an ensemble model and using the above framework to calculate its directional derivatives.
	
	
	\cite{efron2014estimation} proved that the variance estimator using these directional derivatives is consistent for random forests where the training data for each tree is a full bootstrap resample of the original training data, whereas \cite{wager2018estimation} did the same where each tree uses a random subsample (without replacement) of the original training data.
	
	Following similar arguments as in Appendix B.1 of \cite{ghosal2020boosting} the theoretical variance for a random forest prediction at a query point $x$ can be given by $\frac{k_n^2}{n} \zeta_{1,k_n} + \frac1{B_n} \zeta_{k_n,k_n}$, where $\frac{k_n^2}{n} \zeta_{1,k_n}$ is estimated with the usual IJ directional derivatives, and $\zeta_{k_n,k_n} = var_*(T_*(x))$, i.e., the theoretical variance between the tree kernels $T_*$, can be estimated by $var_b(T_b(x))$.

	\subsubsection{Bias Corrections for IJ Estimates with Ensembles}
	Estimating $\frac{k_n^2}{n} \zeta_{1,k_n}$ with the infinitesimal Jackknife method introduces an upward bias (\cite{zhou2021v}, \cite{wager2018estimation}, \cite{wager2014confidence}). The source of this bias is  due to the fact that random forests are incomplete U-statistics, i.e, the number of trees used ($B_n$) are very small compared to the total number of possible trees that could be used (the theoretical $U_i$  assumes a model that uses every possible subsample whereas in practice we use a Monte Carlo estimate).  
	In practice $B_n \ll \binom{n}{k_n}$ for subsampling without replacement and $B_n \ll (k_n)^n$ for subsampling with replacement; requiring a correction term as discussed in below. 
	
	\begin{itemize}
		\item \textbf{Ranger's Correction}: Using the IJ directional derivatives defined above, the uncorrected variance estimate for a random forest is given by $\sum_{i=1}^n (cov_b(N_{i,b}, T_b(x)))^2$. But this is actually an estimate for the population quantity $\sum_{i=1}^n (cov_*(N_{i,*}, T_*(x)))^2$, where the covariance is over all possible trees. Focusing on the $i$\textsuperscript{th} term we are estimating $(cov_*(N_{i,*}, T_*(x)))^2$ with $(cov_b(N_{i,b}, T_b(x)))^2$. If we define $N^{(c)}$ and $T^{(c)}$ to be the mean-centered versions of the inbag count and trees respectively then $\left(\EE[N^{(c)}_{i,*} T^{(c)}_*(x)]\right)^2$ is estimated by $\left(\frac1{B_n} \displaystyle\sum_{b=1}^{B_n} N^{(c)}_{i,b} T^{(c)}_b(x)\right)^2$. When using a finite ensemble, we write
		$$
		\frac1{B_n} \displaystyle\sum_{b=1}^{B_n} N^{(c)}_{i,b} T^{(c)}_b(x) = \EE[N^{(c)}_{i,*} T^{(c)}_*(x)] + e,
		$$
		where $e$ is the Monte Carlo error. We assume that this error $e$ is independent of the population quantity $\EE[N^{(c)}_{i,*} T^{(c)}_*(x)]$, and thus $\left(\EE[N^{(c)}_{i,*} T^{(c)}_*(x)] + e\right)^2$ has an expected value of $\left(\EE[N^{(c)}_{i,*} T^{(c)}_*(x)]\right)^2 + \EE[e^2]$, where $\EE(e^2)$ can be estimated by 
		\begin{align*}
			\hat{\EE}[e^2] = \frac{var_b(N^{(c)}_{i,b} T^{(c)}_b(x))}{B_n} \approx \frac{var_b(N^{(c)}_{i,b}) \cdot var_b(T^{(c)}_b(x))}{B_n}    
		\end{align*}
		in which $N^{(c)}_{i,b}$ and $T^{(c)}_b(x)$ are assumed to be approximately independent. Finally note that $var_b(N^{(c)}_{i,b}) = var_b(N_{i,b})$ and $var_b(T^{(c)}_b(x)) = var_b(T_b(x))$. Hence we can estimate $\sum_{i=1}^n (cov_*{N_{i,*}, T_*(x)})^2$ by 
		$$
		\sum_{i=1}^n (cov_b(N_{i,b}, T_b(x)))^2 - \frac1{B_n} \sum_{i=1}^n var_b(N_{i,b}) \cdot var_b(T_b(x)).
		$$
		
		The final corrected variance estimate for a random forest prediction at a query point $x$ is thus given by
		$$
		\sum_{i=1}^n (cov_b(N_{i,b}, T_b(x)))^2 - \frac1{B_n} \left(\sum_{i=1}^n var_b(N_{i,b}) - 1\right) \cdot var_b(T_b(x))
		$$
		
		We can extend this correction to the covariance terms as well. For two query points $x_1$ and $x_2$ the original uncorrected covariance is given by
		$$
		\sum_{i=1}^n cov_b(N_{i,b}, T_b(x_1)) \cdot cov_b(N_{i,b}, T_b(x_2))
		$$
		and which is corrected by subtracting
		$$
		\frac1{B_n} \left(\sum_{i=1}^n var_b(N_{i,b}) - 1\right) \cdot cov_b(T_b(x_1), T_b(x_2)).
		$$
		
		We label this \textbf{``ranger's correction"} because this is the correction implemented in the \texttt{ranger} package for fitting random forests [\cite{rangerRpackage}]. While computationally simple, it relies on the strong underlying assumption that $N^{(c)}_{i,b}$ and $T^{(c)}_b(x)$ are independent, which introduces further bias for the corrected variance estimator. The constructed confidence interval based on this estimator could suffer from the under-coverage.
		
		\item \textbf{V-statistics based Correction}: 
		When the ensemble is obtained using subsamples taken with replacement, the random forests estimator can be regarded as a V-statistic [\cite{zhou2021v}]. Similar to \cite{sun2011efficient}, an ANOVA-like estimator of variance can then be constructed. Let $N_i = \sum_{b=1}^{B_n} N_{i,b}$, $m_i(x)= \sum_{b=1}^{B_n} \frac{N_{i,b}}{N_i} T_b(x)$ and $\bar{m}(x) = \frac{1}{n} \sum_{i=1}^n m_i(x)$. Define
		\begin{align*}
			\text{SS}_{\tau}(x) = \sum_{i=1}^n N_i (m_i(x) - \bar{m}(x))^2 \quad \text{and} \quad \text{SS}_{\epsilon}(x) = \sum_{i=1}^n \sum_{b=1}^{B_n} N_{i,b} (T_b(x) - m_i(x))^2
		\end{align*}
		A bias-corrected estimate can be given by
		$$
		\frac{\text{SS}_{\tau}(x) - (n-1){\hat{\sigma}}^2_{\epsilon}(x)}{C - \sum_{i=1}^n N^2_i/C},
		$$
		where $C = \sum_{i=1}^n N_i$ and ${\hat{\sigma}}^2_{\epsilon} (x) = \frac{\text{SS}_{\epsilon}(x)}{C-n}$.
		
		Similarly, this can be also extended to the case of covariance estimation as well.  For two query points $x_1$ and $x_2$, define
		\begin{align*}
			\text{SS}_{\tau}(x_1,x_2) &= \sum_{i=1}^n N_i (m_i(x_1) - \bar{m}(x_1)) (m_i(x_2) - \bar{m}(x_2)) \\ \text{SS}_{\epsilon}(x_1,x_2) &= \sum_{i=1}^n \sum_{b=1}^{B_n} N_{i,b} (T_b(x_1) - m_i(x_1))(T_b(x_2) - m_i(x_2))
		\end{align*}
		The bias-corrected estimate can be constructed by
		$$
		\frac{\text{SS}_{\tau}(x_1,x_2) - (n-1){\hat{\sigma}}^2_{\epsilon}(x_1,x_2)}{C - \sum_{i=1}^n N^2_i/C},
		$$
		where  ${\hat{\sigma}}^2_{\epsilon} (x_1,x_2) = \frac{\text{SS}_{\epsilon}(x_1,x_2)}{C-n}$ for the natural extensions of $\text{SS}_{\epsilon}$ and $\text{SS}_{\tau}$ to multiple query points. 
		
		This estimate is an unbiased (over the choice of subsamples) for the IJ estimate that results from using  all possible trees [\cite{searle2009variance}]. This framework can be also naturally extended to general ensemble models rather than just random forests and provides a unified treatment of variance estimation. In \cref{sec:v-stat}, we will also incorporate the ensemble XGBoost models into this framework. We point out that this method does not work for U-statistics for the reason that its sampling schema is not equivalent to sampling from the empirical distribution. There is also work that develops an unbiased variance estimator based on incomplete U-statistics for variance estimation of random forests, where the tree size is allowed to be comparable with the sample size [\cite{xu2022variance}]. However, we only focus on the case when the subsampling is taken with replacement in this paper.
	\end{itemize}
	
	In order to illustrate the advantage of V-statistics based correction over ranger's correction, we present a small simulation to compare their ability to estimate variance by examining the coverage of $E\hat{f}(x)$ (i.e. CoE) for predictions by using the experiment settings mentioned in \cref{sec:sim_frame}. We consider three data generating processes. We use the random forests with trees grown to full depth and set the number of trees B to be 1000 or 5000. The result is presented in Figure \ref{fig_bias}. We draw the boxplots and violin plots for the CoE over 100 query points. From the plots, we observe that ranger's correction suffers from undercoverage, especially when the number of trees is not very large. In contrast, V-statistics based correction can produce coverage probability close to or above 0.95 with reasonable number of trees. For the remainder of this paper, we will use V-statistics based correction for estimating the covariance matrix of random forests.
	
	\begin{figure}[ht!]
		\centering
		\includegraphics[width=0.9\textwidth]{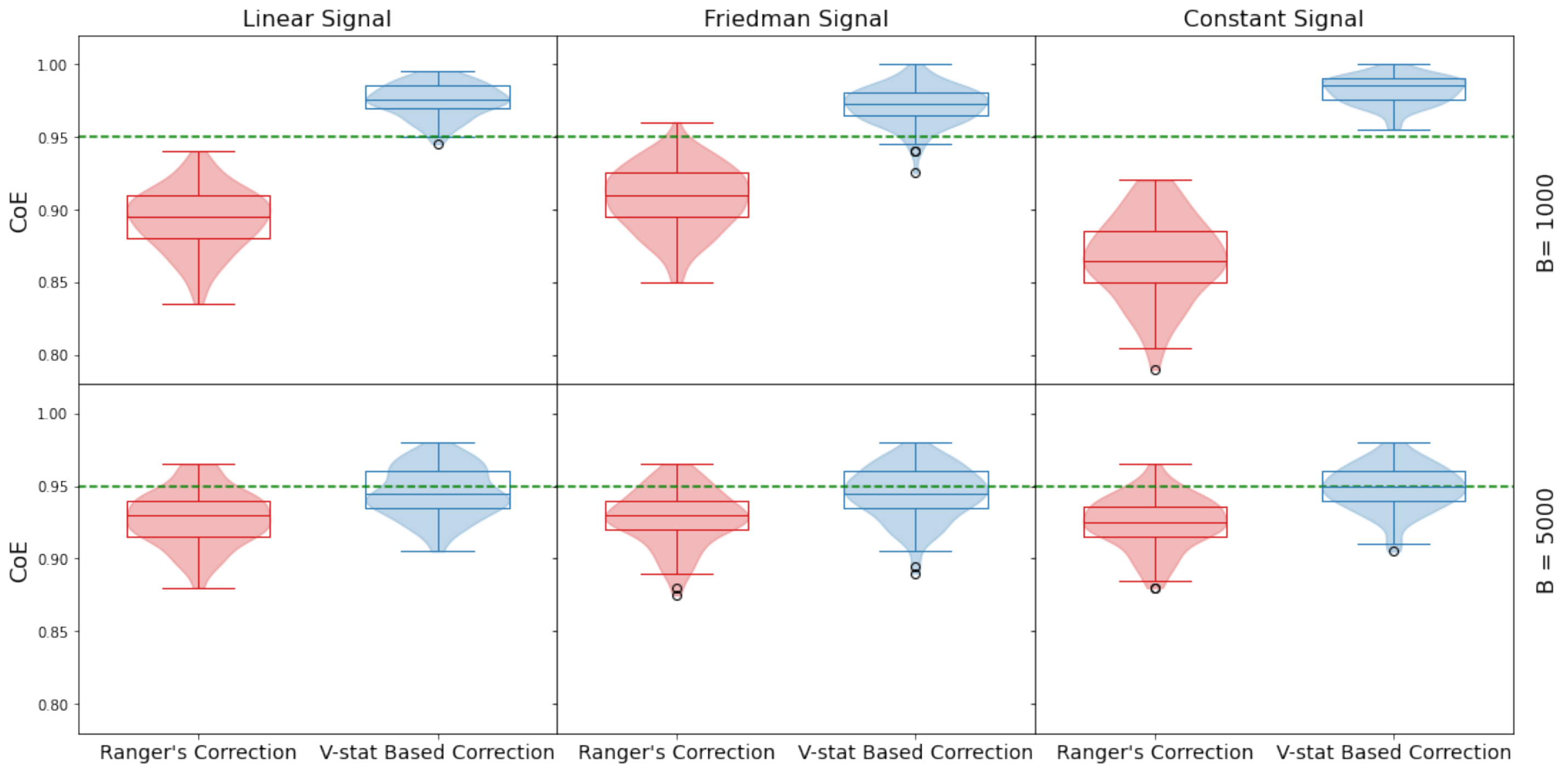}		
		\captionsetup{width=0.9\textwidth}
		\caption{Comparison of coverage of $E[\hat{f}(x)]$  using 95\% confidence intervals between ranger's correction (red) and the V-statistic based correction (blue). We consider three data generating processes. The number of trees $B$ is set to be 1000 or 5000. Plots present results for coverage at 100 randomly generated query points.} 
		\label{fig_bias}
	\end{figure}

	\subsection{IJ for Kernel Methods}\label{subsec:kernel}
	Kernel regression is a non-parametric technique to estimate the conditional expectation of a random variable $Y$ relative to a variable $X$ [\cite{nadaraya1964estimating,watson1964smooth}]. Given the dataset $\{X_k,Y_k\}_{k=1}^n$, the Nadaraya–Watson estimator for any query point $x$ is defined as
	\begin{align*}
		\hat{m}_h(x) = \frac{\sum_{k=1}^nK_n(x-X_k)Y_k}{\sum_{k=1}^nK_n(x-X_k)}
	\end{align*}
	We define $\hat{m}_h(x) = \frac{\sum_{k=1}^n C_k Y_k}{\sum_{k=1}^n C_k}$, where $C_k = K_n(x-X_k)$. Then the directional derivatives of $\hat{m}_h(x)$ are given by
	$$
	U_i = \lim_{\epsilon \to 0} \frac{\hat{\theta}(P^0 + \epsilon(\delta_i - P^0)) - \hat{\theta}(P^0)}{\epsilon}, \text{ where } \hat{\theta}(P) = \frac{\sum_{k=1}^n P_k C_k Y_k}{\sum_{k=1}^n P_k C_k}, \sum_{k=1}^n P_k = 1
	$$
	We let $p = \sum_{k=1}^n C_k Y_k$, $q = \sum_{k=1}^n C_k$, $r = nC_iY_i$ and $s = nC_i$ then we get 
	\begin{align*}
		U_i &= \lim_{\epsilon \to 0} \frac1\epsilon \left[ \frac{\frac1{n}(1-\epsilon+n\epsilon)C_iY_i + \sum_{k \neq i} \frac1{n} (1-\epsilon)C_kY_k}{\frac1{n}(1-\epsilon+n\epsilon)C_i + \sum_{k \neq i} \frac1{n} (1-\epsilon)C_k} - \frac{\sum_{k=1}^n C_k Y_k}{\sum_{k=1}^n C_k} \right] \\
		&= \lim_{\epsilon \to 0} \frac1\epsilon \left[ \frac{n\epsilon C_iY_i + (1-\epsilon)\sum_{k=1}^n C_k Y_k}{n\epsilon C_i + (1-\epsilon)\sum_{k=1}^n C_k} - \frac{\sum_{k=1}^n C_k Y_k}{\sum_{k=1}^n C_k} \right] \\
		&= \lim_{\epsilon \to 0} \frac1\epsilon \left[ \frac{(1-\epsilon)p + \epsilon r}{(1-\epsilon)q + \epsilon s} - \frac{p}{q} \right] \\
		&= \lim_{\epsilon \to 0} \frac{(1-\epsilon)pq + \epsilon rq - (1-\epsilon) pq - \epsilon ps}{\epsilon q ((1-\epsilon)q + \epsilon s)} \\
		&= \frac{rq - ps}{q^2}
	\end{align*}
	With the directional derivatives above, we can calculate variance estimates and produce confidence intervals  for kernel regression predictions. In \cref{subsec:modif}, we will show that the local model modifications proposed in [\cite{lu2021unified}] share similar structures with kernel regression so we can follow the calculations above to obtain their directional derivatives.

	\section{Boosting Methods} \label{sec:boost}
	If a model is inadequate to capture the underlying relationships between $y$ and $x$ in a data set, another model can be added to pick up the rest of the signal. We define this process to be be boosting in reference to gradient boosting [\cite{friedman2001greedy}]. Previous work in \cite{ghosal2020boosting} has shown that boosting a random forest with another one achieves significant reduction in bias and test-set MSE with a small increase in variance. An extension of this can be applied to non-Gaussian signals where we initialise with an MLE type estimator and then boost with (one or two) random forests with appropriately defined ``pseudo-residuals'' as the training signals [\cite{ghosal2021generalised}]. We further extend the idea in this paper to boost any model with another one and obtain uncertainty quantification via the IJ covariance estimate.
	
	A common problem shared by this boosting technique is the inheritance of variability from one stage of modelling to the next. This does not impede the modelling process on empirical data but asymptotic analysis of these boosted models are easier if we assume the following condition.
	
	\begin{cond} \label{cond:regularity}
		We're given an initial dataset $D = (Z_i)_{i=1}^n = (Y_i, X_i)_{i=1}^n$. Suppose $g$ is a fixed function of $X$ (the predictor) and $\hat{g}$ is an unbiased estimator of $g$ based on this dataset. Assume  a model $\hat{f}$ is fit with training signal $\big(h(\hat{g}(X_i), Z_i)\big)_{i=1}^n$. If $\check{f}$ is the model fitted the same way as $\hat{f}$ but with training signals $\big(h(g(X_i), Z_i)\big)_{i=1}^n$ then for any query point $x$
		$$
		\frac{\hat{f}(x) - \check{f}(x)}{\sqrt{var(\hat{f}(x))}} \xrightarrow{p} 0.
		$$
	\end{cond}
	The function $h$ may include training weights if needed. Special cases of this condition has been used in previous literature:
	\begin{itemize}
		\item In \cite{ghosal2020boosting} $\hat{g}$ and $\hat{f}$ are the first and second (base and boost) stage random forests respectively.
		\item In \cite{ghosal2021generalised} $\hat{g}$ could be the MLE-type estimator ($\hat{\eta}_{MLE}^{(0)}$) or the sum of $\hat{\eta}_{MLE}^{(0)}$ and the first random forest ($\hat{f}_1$) - in those cases $h$ would be the training signals and weights for the random forests $\hat{f}_1$ and $\hat{f}_2$ respectively. 
	\end{itemize}

	For two-step boosted models our estimators will be of the form $\hat{f}_1 = \hat{F}_1$ and $\hat{f}_2 = \hat{F}_1 + \hat{F}_2$. Suppose in this case the directional derivatives for $\hat{F}_1$ and $\hat{F}_2$ are given by $U_i^{(1)}(x)$ and $U_i^{(2)}(x)$ respectively for any query point $x$ and $i = 1, \dots, n$ and also define
	$$
	\hat{\Sigma}^{(pq)}_{ij} = \frac1{n^2} \sum_{k=1}^n U_k^{(p)}(x_i) U_k^{(q)}(x_j), \; p,q = 1,2, \; i,j = 1, \dots, m
	$$
	to be the covariance estimate for $\Sigma^{(pq)}_{ij} := cov(\hat{F}_p(x_i), \hat{F}_q(x_j))$. Then we can construct valid confidence interval for $\hat{f}_2$. Specifically, the covariance matrix of $\hat{f}_2$ over the query points will be given by $\hat{\Sigma} = \hat{\Sigma}^{(11)} + \hat{\Sigma}^{(22)} + \hat{\Sigma}^{(12)} + \hat{\Sigma}^{(21)}$ which can then be used to construct confidence intervals.
	
	\subsection{Boosted Forests} \label{subsec:boost_rf}
	\cite{ghosal2020boosting} discussed a version of boosting models for Gaussian responses. There the first stage model (base learner) was a random forest and the subsequent boosted stage model was another random forest whose training signals were the residuals of the first forest. Here, boosted forests provided lower test set MSE and higher confidence interval coverage. With the techniques of this paper, we compare coverage of $f(x)$ (i.e. CoT) for prediction intervals between random forests and boosted forest models.
	
	We use the same simulation setup as in \cref{sec:sim_frame}. Figure \ref{fig_boost_rf} presents boxplots and violin plots of the coverage of $f(x)$ for 95\% confidence intervals. We use a random forest with the maximum depth of 3 as the initial estimator at the first step. In the second step, we use random forests with different maximum depth (3,5,7,9, "Full" on the x axis) for boosting, where "Full" represents  random forests with trees grown to full depth. We also include the case when there is no second step of boosting (red boxplot and violin plot in each subplot). For comparison purpose, we also use green star symbols to present the average coverage of $E\hat{f}(x)$ (i.e. CoE) with 95\% level over 100 query points. 
	
	\begin{figure}[ht!]
		\centering
		\includegraphics[width=0.9\textwidth]{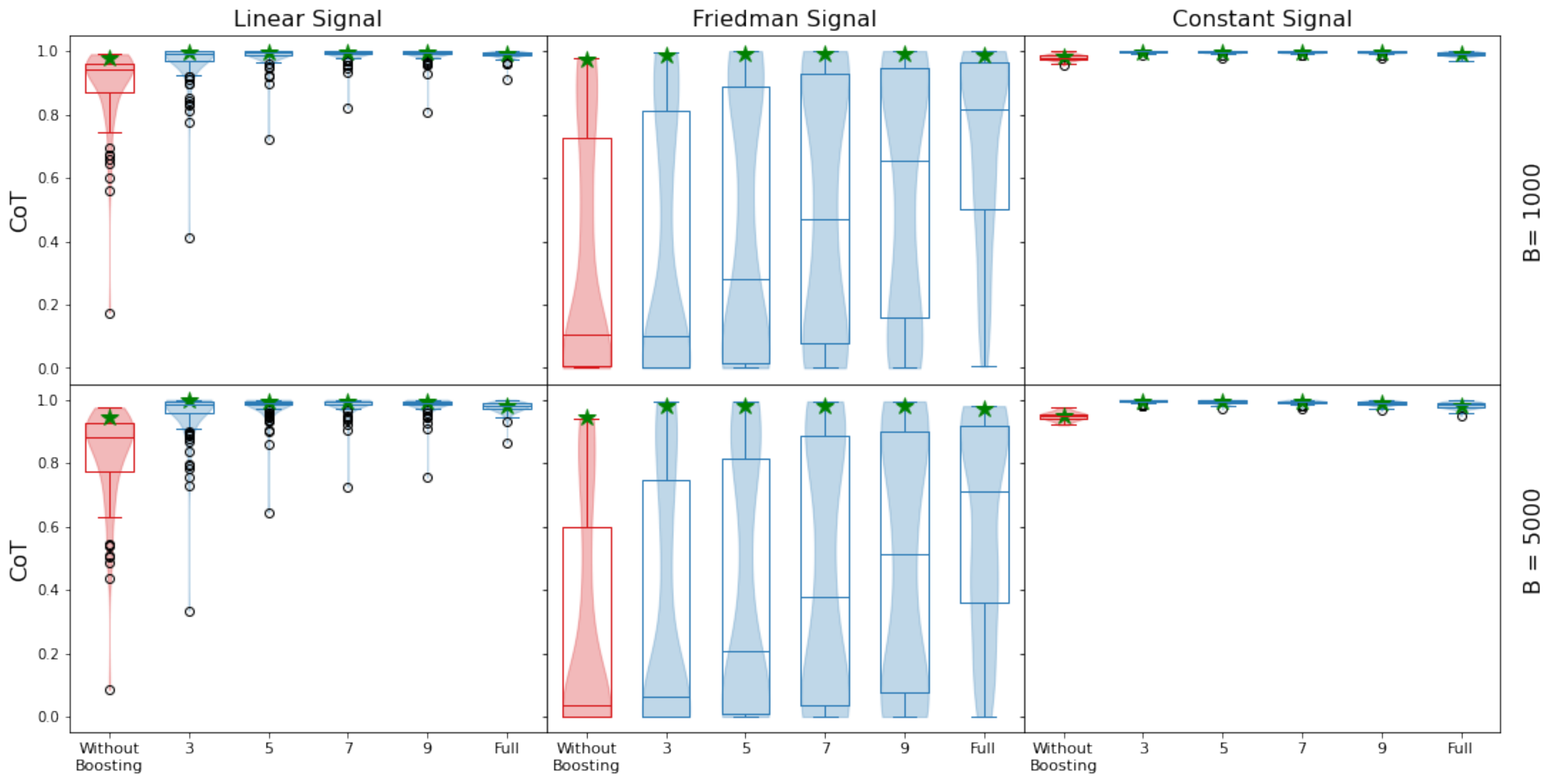}
		\captionsetup{width=0.9\textwidth}
		\caption{Coverage of $f(x)$ (CoT) of 95\% confidence intervals for boosted of random forests taken over 100 query points. Three scenarios of model signals are considered and the number of trees $B$ is set to be 1000 or 5000. We use a random forest with the maximum depth of 3 as the initial estimator at the first step. In each subplot, the x axis represents the model for the second boosting step. The red boxplot  and violin plot with x axis label "Without Boosting" is for the case when there is no second step of boosting. Numbers "3,5,7,9" denote the different values of maximum depth when using random forests for boosting. "Full" represents random forests with trees grown to full depth for boosting. For comparison, we also use green stars to present the average coverage of $E \hat{f}(x)$  (CoE) over the same query points.} 
		\label{fig_boost_rf}
	\end{figure}
	
	A first observation is that under Linear and Friedman signals, the initial random forests model suffers from low coverage of the target and the boosting process can efficiently improve it. This tells us that the initial model has serious bias and this bias can be further reduced by using boosting methods. Particularly for the Friedman signal, we can see that increasing the maximum depth in the second boosting step can lead to a higher coverage rate. By contrast, in the Constant signal, we can see that the initial model has already achieved a good coverage rate. As a result, boosting with a second model is unnecessary. Coverage of $E\hat{f}(x)$ is consistently good across all the settings indicating that our IJ methods capture the variance well. 
	
	\subsection{Forest Refinements of a Linear Model}
	An important observation within this paper is that the generality of the IJ estimate means that we need not restrict ourselves to boosting using only ensembles. Here we work with the GLM (generalized linear model) class of models to which we add a random forest to capture remaining signal and compare the performance of such a boosted model with the base GLM. We use linear model as a specific example of GLM for the demonstrations.
	
	We consider the exact same setting in \cref{subsec:boost_rf}, except that we use simple linear regression model for the initial step of boosting. The resulting coverage is shown in Figure \ref{fig_boost_lm}. Clearly in case of the Linear and Constant signals the linear model performs optimally making the boosting step unnecessary. Also as expected the linear model doesn't perform well for the Friedman signal and it's forest refinement can achieve better coverage. In \cref{sec:modelcomp}, we will demonstrate that we can also look at tests to check whether boosting gives a significant difference for the model predictions.
	
	\begin{figure}[ht!]
		\centering
		\includegraphics[width=0.9\textwidth]{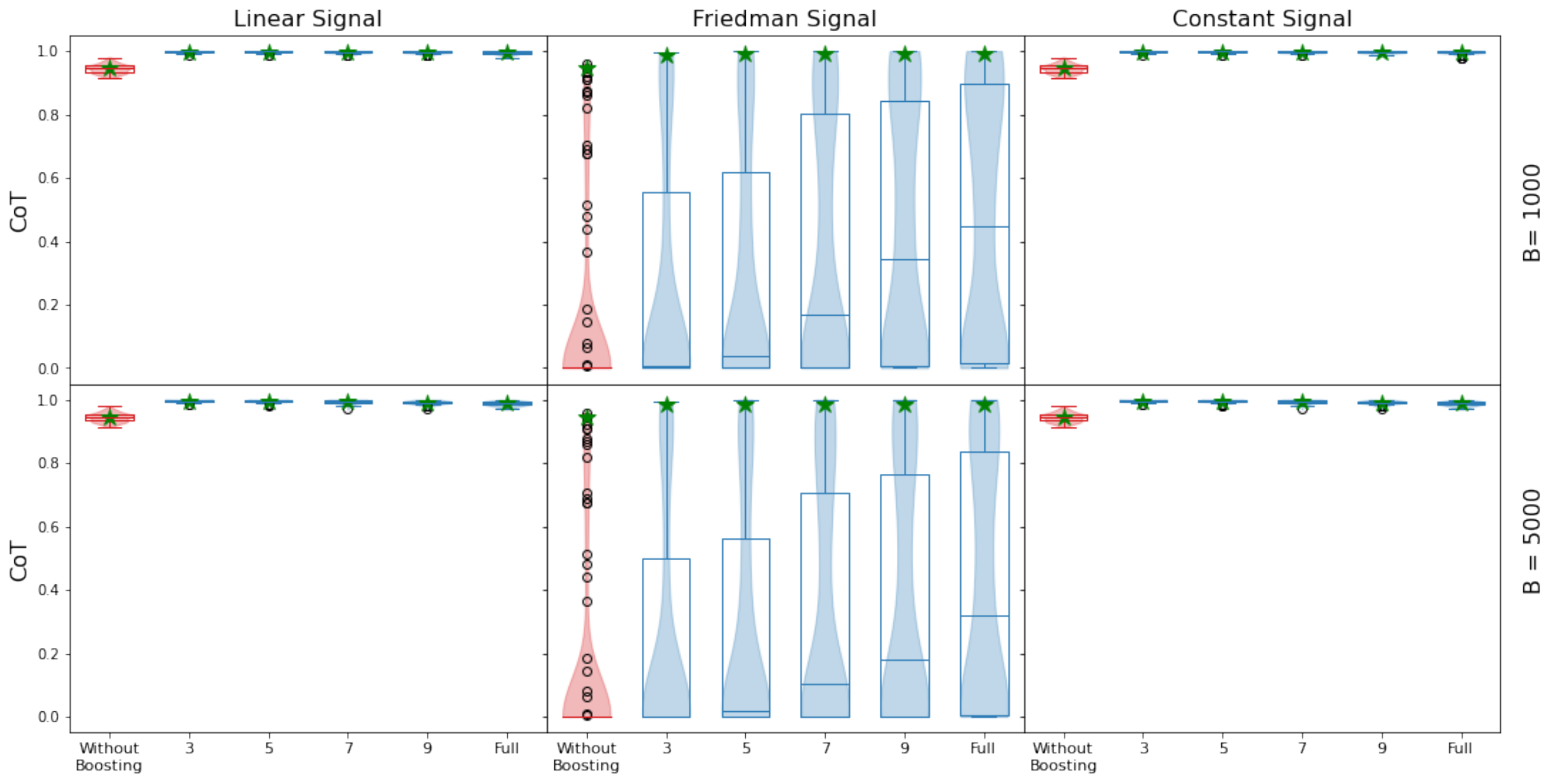}
		\captionsetup{width=0.9\textwidth}
		\caption{Coverage of $f(x)$ using 95\% confidence intervals for boosting a linear model with random forests over 100 query points. The rest of the setup is the same as Figure \ref{fig_boost_rf}.} 
		\label{fig_boost_lm}
	\end{figure}

	\subsection{Local Model Modifications of Random Forests} \label{subsec:modif}
	The approach of boosting models above can be cast within the framework of smoothing residuals to test for goodness of fit [\cite{hart2013nonparametric}], and this can also be thought of as providing a bias correction by boosting with a second-stage kernel smoother. We illustrate this procedure with the following example method.
	
	In \cite{lu2021unified} the a local linear modification is defined by a bias term is given by $\widehat{Bias}(x) = \sum_{k=1}^n v_k(x) (Y_k - \hat{f}(X_k))$, where $(Y_k, X_k)$ is the $k$\textsuperscript{th} training data, $x$ is a query point and $v_k(x)$ is the out-of-bag in-leaf proportion given by
	$$
	v_k(x) = \frac{\sum_{b=1}^{B_n} \ind\left\{ Z_k \notin I_b, X_k \in L_b(x) \right\}}{\sum_{\ell=1}^n \sum_{b=1}^{B_n} \ind\left\{ Z_\ell \notin I_b, X_\ell \in L_b(x) \right\}}
	$$
	in which $X_k \in L_b(x)$ indicates that $X_k$ falls into the same leaf as $x$ in the $b$th tree, and we weight co-inclusion only over out-of-bag samples. 
	In that paper, prediction intervals are produced from the distribution of out-of-bag errors weighted by the $v_k$. In this paper we use the IJ variance estimates to provide confidence intervals of the bias corrected estimator. We rewrite $\widehat{Bias}(x) = \frac{\sum_{k=1}^n C_k D_k}{\sum_{k=1}^n C_k}$, where $C_k = \sum_{b=1}^{B_n} \ind\left\{ Z_k \notin I_b, X_k \in L_b(x) \right\}$ and $D_k = Y_k - \hat{f}(X_k)$. If we denote $p = \sum_{k=1}^n C_k D_k$, $q = \sum_{k=1}^n C_k$, $r = nC_iD_i$ and $s = nC_i$, then following the same calculations in \cref{subsec:kernel},  the directional derivatives of $\widehat{Bias}(x)$ are given by
	\begin{align*}
		U_i = \frac{rq - ps}{q^2}
	\end{align*}
	Then the final estimate given by $\hat{f}(x) + \widehat{Bias}(x)$ will have a variance that is consistently estimated by
	$$
	V(x) = \frac1{n^2} \sum_{i=1}^n (U'_i + U_i)^2, \qquad U'_i = n \cdot cov_b(N_{i,b}, T_b(x))
	$$
	Using this variance estimate we can thus compare the models $\hat{f}$ vs $\hat{f} + \widehat{Bias}$ and also produce confidence intervals for the final estimator $\hat{f} + \widehat{Bias}$.
	
	Using the simulation setup in \cref{sec:sim_frame}, we compare the coverage of $f(x)$ (CoT) between the original model and the model after modification. The resulting plot is in Figure \ref{fig_mod_rf}. We observe that the local linear modifications does not affect coverage for the Constant signal since the random forest with a simple structure is already enough to fit the signal. As for Linear and Friedman signal, we can see that local model modification can efficiently improve the coverage rate by reducing the bias of original model estimator. 
	
	We notice that Figure \ref{fig_boost_rf}, \ref{fig_boost_lm} and \ref{fig_mod_rf} share similar patterns. Under the Constant signal, the original model (without boosting or local linear modifications) already achieves good CoT. So boosting or model modifications doesn't change the coverage much at all. However, under the Friedman signal, the original model suffers from bias and the boosted or modified model can efficiently improve the coverage. Under Linear signal, the original linear regression model of Figure \ref{fig_boost_lm} already achieves optimal performance while it still needs boosting or modifications in the scenarios of Figure \ref{fig_boost_rf} and \ref{fig_mod_rf}. By comparing the CoT, we conclude that boosting of random forests in Figure \ref{fig_boost_rf} can achieve the best overall performance.
	
	\begin{figure}[ht!]
		\centering
		\includegraphics[width=0.9\textwidth]{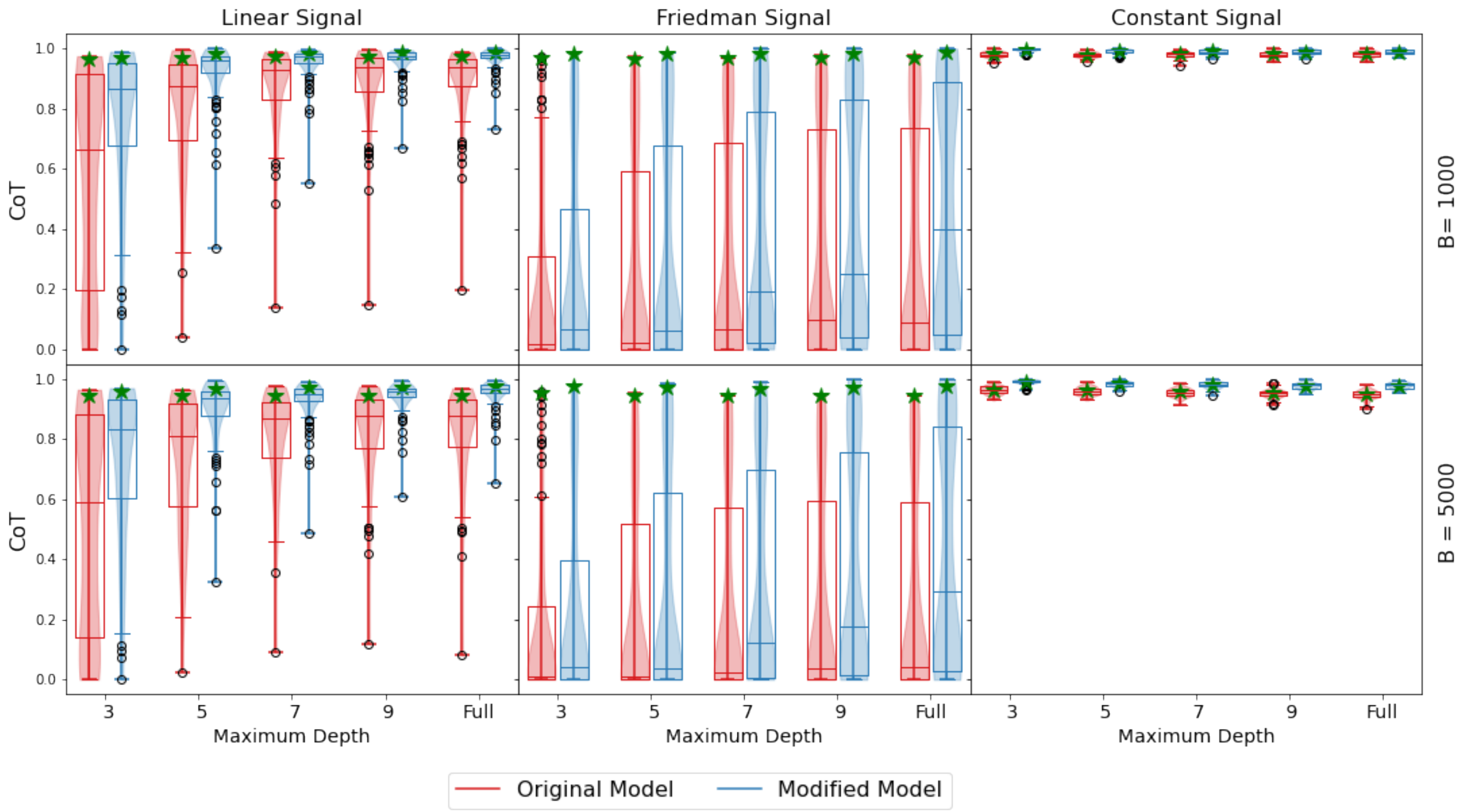}
		\captionsetup{width=0.9\textwidth}
		\caption{Coverage of $f(x)$ (CoT) using 95\% confidence intervals for local model modification of random forests over 100 query points. The rest of the setup is the same as Figure \ref{fig_boost_rf}.} 
		\label{fig_mod_rf}
	\end{figure}

	\section{Model Comparisons} \label{sec:modelcomp}
	In this section, we first demonstrate the procedure of constructing the test statistics for model comparisons. Then we give an example by comparing random forests with different maximum depths and GLM. 
	
	\subsection{Test Statistics Construction} \label{subsec:testcomp}
	Suppose we have training data $\big(Z_i\big)_{i=1}^n = \big((Y_i,X_i)\big)_{i=1}^n$ and we fit two estimators $\hat{f}_1$ and $\hat{f}_2$. We wish to test if the predictions from these two estimators are significantly different. Note that this difference (and its significance) may change with the query point. This will be demonstrated over a set of query points $x_1, \dots, x_m$; evaluation of this difference at a single query point is a special case.
	
	Suppose the directional derivatives for $\hat{f}_1$ and $\hat{f}_2$ are given by $U_i^{(1)}(x)$ and $U_i^{(2)}(x)$ respectively for any query point $x$ and $i = 1, \dots, n$. Now consider the vector
	$$
	F = \begin{pmatrix} F_1 \\ F_2 \end{pmatrix} \sim N(\mu, \Sigma) ; \text{ asymptotically, where } F_p = \begin{pmatrix} \hat{f}_p(x_1) \\ \vdots \\ \hat{f}_p(x_m) \end{pmatrix}, \; p = 1,2
	$$
	here $\Sigma = \begin{pmatrix}
		\Sigma^{(11)} & \Sigma^{(12)} \\
		\Sigma^{(21)} & \Sigma^{(22)}
	\end{pmatrix}$ is a $2m \times 2m$ with $m \times m$ blocks. Now $\Sigma^{(pq)}_{ij}$ is the covariance between $\hat{f}_p(x_i)$ and $\hat{f}_q(x_j)$, $p,q = 1,2$, $i,j = 1, \dots, m$ and can be estimated by 
	$$
	\hat{\Sigma}^{(pq)}_{ij} = \frac1{n^2} \sum_{k=1}^n U_k^{(p)}(x_i) U_k^{(q)}(x_j)
	$$
	
	The consistency of the IJ covariance estimate between two random forests was shown in \cite{ghosal2020boosting}. Consistency of the covariance estimate between an M-estimator and a random forest is shown in \cref{sec:IJcons}; a special case of this consistency was shown previously in \cite{ghosal2021generalised} where the M-estimator was an MLE.
	
	Now our null hypothesis is that the estimators $\hat{f}_1$ and $\hat{f}_2$ are not significantly different and thus $\EE[F_1 - F_2] = 0$ under the null. Further the covariance matrix of $F_1 - F_2$ will be consistently estimated by $\hat{\Sigma} = \hat{\Sigma}^{(22)} + \hat{\Sigma}^{(11)} - \hat{\Sigma}^{(12)} - \hat{\Sigma}^{(21)}$. Thus our test-statistic will be
	$$
	(F_1 - F_2)^\top \hat{\Sigma}^{-1} (F_1 - F_2) \sim \chi^2_m
	$$
	where $\chi^2_m$ represents the chi-squared distribution with $m$ degrees of freedom. Note that the estimation of $\tilde{\Sigma} = \Sigma^{(22)} + \Sigma^{(11)} - \Sigma^{(12)} - \Sigma^{(21)}$ with $\hat{\Sigma}$ involves uncertainty which will depend on $n$ and other details of fitting the estimators $\hat{f}_1$ and $\hat{f}_2$ such as number of trees for fitting random forests. This uncertainty is unaccounted for in our procedures; we assume that it is small enough so that the null distribution of the test-statistic above holds asymptotically. 
	
	In the case of random forests, note that we can directly use V-statistics based correction method to obtain unbiased estimates of $\Sigma^{(11)}$ and $\Sigma^{(22)}$. Then $\Sigma^{(12)}$ and $\Sigma^{(21)}$ are estimated by using the uncorrected directional derivatives $U_i(x) = n \cdot cov_b(N_{i,b}, T_b(x))$ defined in \cref{subsec:rf_bias}. This covariance estimate is unbiassed when $f_1$ and $f_2$ are independent, conditional on the training set since the Monte Carlo error in the directional derivatives of each has expectation 0.  When $f_1$ and $f_2$ are both ensembles, they can alternatively be obtained using the same subsamples, in which case the combination of corresponding ensemble members can be thought of as a single ensemble and IJ applied accordingly. \cite{ghosal2020boosting} found that choosing subsamples independently between the two models improved predictive performance we we take this approach throughout.

	When a large number of query points are used, we must estimate a covariance matrix with large dimensions. Doing so accurately becomes increasingly challenging as the dimension increases. These errors are compounded when taking the difference between models and readily result in negative eigenvalues. For this paper we only use a small $(\leq 5)$ number of qu this to comparing any two stages of a boosting model if we boost for more than two steps). Using the notations in \cref{sec:boost}, we wish to test if the use of $\hat{f}_2$ was a signifiery points to ensure a stable and well-conditioned $\Sigma$, but discuss potential alternatives in \cref{sec:dis}.
	
	Note that for boosting models we can check if the boosting provides a significant difference by comparing the base model and the final estimate (in fact we can generalise this to comparing any two stages of a boosting model if we boost for more than two steps). Using the notations in \cref{sec:boost}, we wish to test if the use of $\hat{f}_2$ was a significant addition, i.e, if the predictions from $\hat{f}_1$ and $\hat{f}_1+\hat{f}_2$ are significantly different.
	
	In this case 
	$$
	F_p = \begin{pmatrix} \hat{g}_p(x_1) \\ \vdots \\ \hat{g}_p(x_m) \end{pmatrix}, p = 1,2 \text{ and } \hat{\Sigma}^{(pq)}_{ij} = \frac1{n^2} \sum_{k=1}^n V_k^{(p)}(x_i) V_k^{(q)}(x_j),
	$$
	where $\hat{g}_1 = \hat{f}_1$, $\hat{g}_2 = \hat{f}_1 + \hat{f}_2$, $V_k^{(1)} = U_k^{(1)}$, $V_k^{(2)} = U_k^{(1)} + U_k^{(2)}$. Thus our test-statistic can be constructed as
	$$
	G^\top \left( \hat{\sigma}^{(22)} \right)^{-1} G \sim \chi^2_m
	$$
	where 
	\begin{align*}
		G &:= F_2 - F_1 =  \begin{pmatrix} \hat{f}_2(x_1) \\ \vdots \\ \hat{f}_2(x_m) \end{pmatrix} \\
		\hat{\sigma}^{(22)}_{ij} &:= \hat{\Sigma}^{(22)}_{ij} + \hat{\Sigma}^{(11)}_{ij} - \hat{\Sigma}^{(12)}_{ij} - \hat{\Sigma}^{(21)}_{ij} = \frac1{n^2} \sum_{k=1}^n U_k^{(2)}(x_i) U_k^{(2)}(x_j),
	\end{align*}
	Similarly as above the estimate $\hat{\sigma}^{(22)}$ has degrees of freedom which will depend on $n$ and other details of fitting the estimators $\hat{f}_1$ and $\hat{f}_2$. These degrees of freedom are difficult to calculate but we can assume that it is high enough so that the null distribution of the test-statistic above holds asymptotically.
	
	\subsection{Comparisons between random forests and a Linear Model} \label{subsec:comp_rf}
	We give an example of comparing random forests and linear model. We consider the settings described in \cref{sec:sim_frame} and set the number of query points to be 5. Following the procedure in \cref{subsec:testcomp}, we compare the select pairs of models and calculate the power, i.e., the proportion of times the null hypothesis was rejected out of 200 replicates at a 5\% level. These power values are graphically presented in Figure \ref{fig_com_rf}. In each subplot, the numbers $\{3,5,7,9\}$ on the x axis represent random forests with different maximum depth. And "Full" represents random forests with trees grown to full depth. "LM" on the x axis represents the simple linear regression. The colors in each heatmap plot denote the strength of the power. A darker rectangle between two models means a higher power of rejecting the null hypothesis. 
	
	\begin{figure}[ht!]
		\centering
		\includegraphics[width=0.9\textwidth]{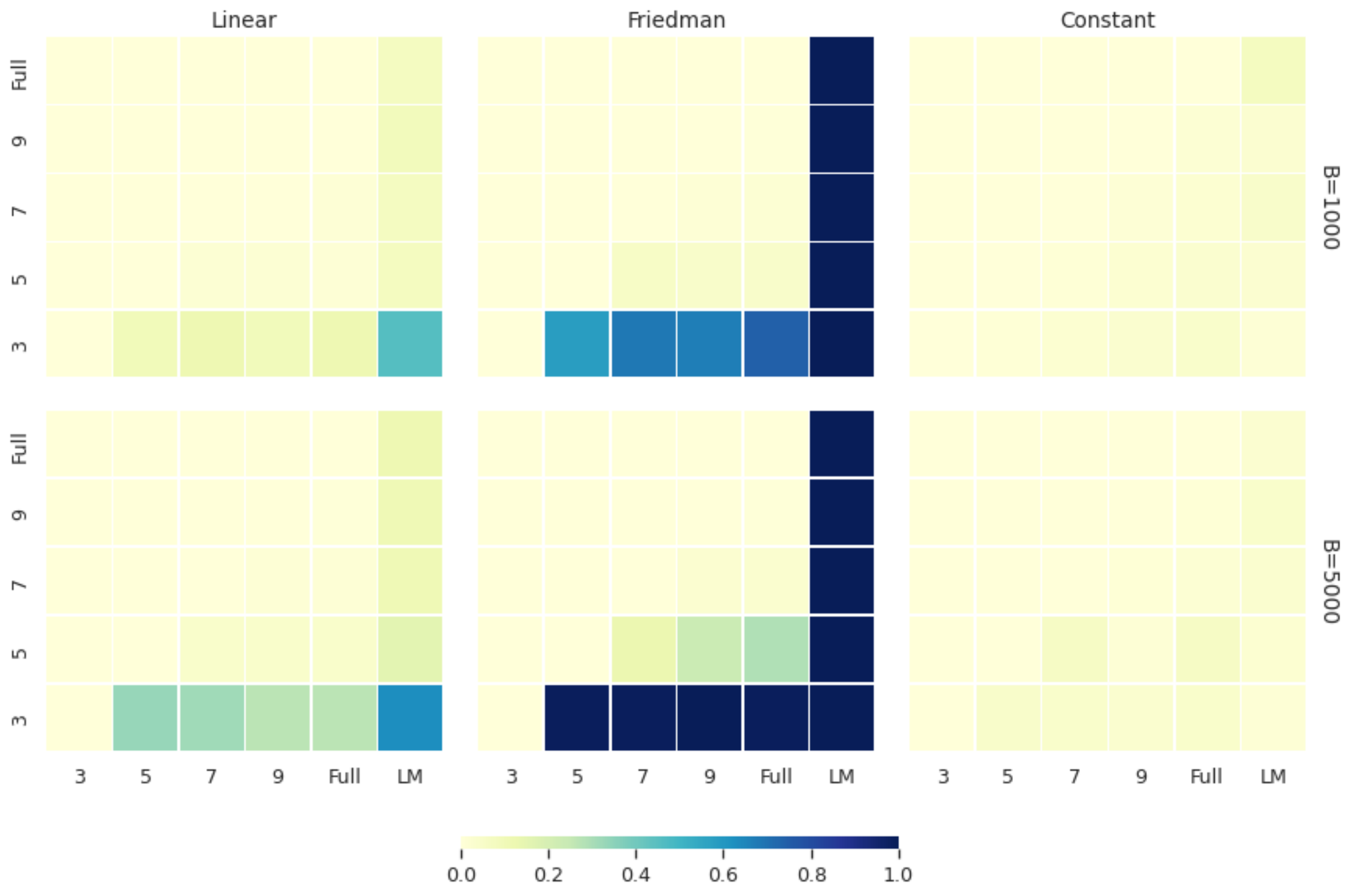}
		\captionsetup{width=0.9\textwidth}
		\caption{Model Comparisons of multiple random forests and linear model. Three data generating processes are considered and the number of query points is set to be 5. The number of trees $B$ is set to be 1000 or 5000. In each subplot, we draw the heatmap for the power, i.e., the proportion of times the null hypothesis was rejected out of 200 replicates at a 5\% level between each model pair. Numbers $\{3,5,7,9\}$ on the x or y axis represent random forests with different maximum depth. "Full" indicates random forests with trees grown to full depth. "LM" on the x axis represents the linear model and we consider simple linear regression in this example.} \label{fig_com_rf}
	\end{figure}
	
	We see that for the Friedman signal the linear model is found to be significantly different from all the random forest models, whereas the effect is much less pronounced for the Linear and Constant signals. The comparison between the different forests shows nearly no difference for Constant signal. This is reasonable since the Constant signal can be easily fitted with any maximum depth. As for Linear and Friedman signals, we observe a relatively high power between the value "3" and "5,7,9, Full". However, the differences attenuate as the maximum depth increases. The choice of larger number of trees gives us more significant differences. This is because we can get a more precise estimator of the covariance with V-statistics based correction. A relatively small number of trees could possibly lead to a mildly conservative test. 
	
	We also give an example to demonstrate the comparisons of boosting models in Figure \ref{fig_com_boost}. We boost linear model with random forests with different maximum depth and compare the base model and the final boosted estimate to check if the boosting provides a significant difference. We observe that only under Friedman Signal, the difference is very significant. However, it shows nearly no difference for both Linear and Constant Signals.
	
	\begin{figure}[ht!]
		\centering
		\includegraphics[width=0.9\textwidth]{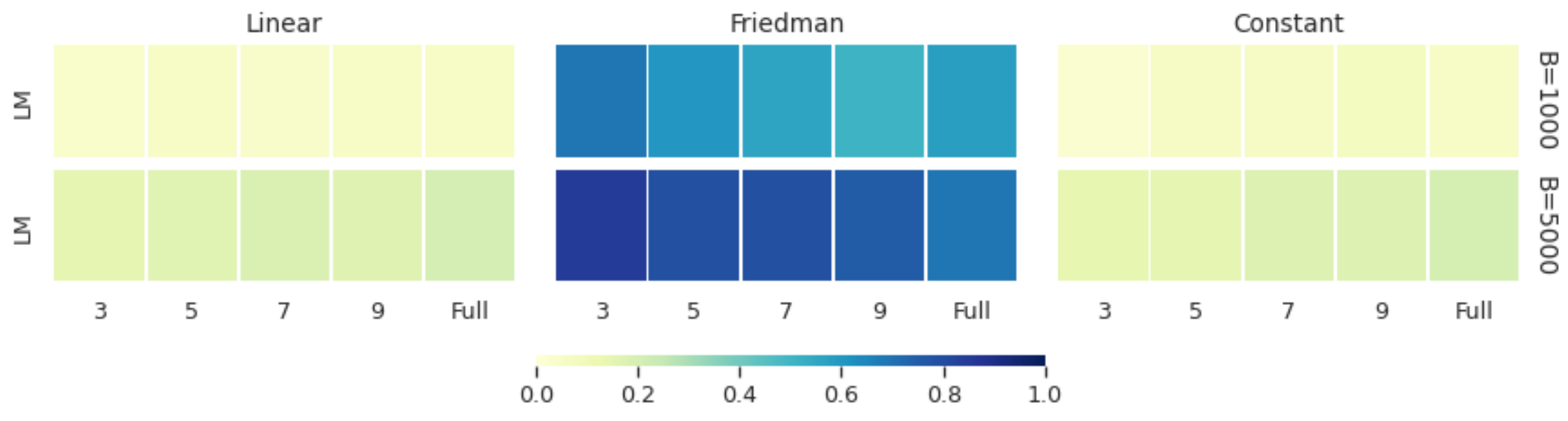}
		\captionsetup{width=0.9\textwidth}
		\caption{Model Comparisons for boosting LM with random forests.Three data generating processes are considered and the number of query points is set to be 5. The rest of the setup is the same as Figure \ref{fig_com_rf}.}
		\label{fig_com_boost}
	\end{figure}

	\section{IJ and Neural Network}\label{sec:nn}
	Deep learning methods have recently achieved state-of-the-art performance on a variety of prediction and learning tasks, naturally leading to the need for uncertainty quantification. Mathematically, we can formalize a neural network with $L$ hidden layers and an activation function $\sigma$ as: 
	\begin{align*} \label{eqn:MLP}
		\begin{split}
			\textrm{NN}\left( x; A^{(1)}, b^{(1)}, \ldots, A^{(L)}, b^{(L)} \right) 
			= \; A^{(L)} \sigma\left\{ \ldots A^{(2)} \sigma\left( A^{(1)} x + b^{(1)} \right) \ldots + b^{(L-1)} \right\} + b^{(L)},
		\end{split}	
	\end{align*}
	where the dimension $m_{\ell}$ is the number of nodes at layer $\ell$, $\ell = 0, \ldots, L$, $x \in \mathbb{R}^{m_0}$ is the input signal, $A^{(s)} \in \mathbb{R}^{m_{\ell} \times m_{\ell-1}}, b^{(s)} \in \mathbb{R}^{m_{\ell}}$ are the parameters that produce the linear transformation of the $(\ell-1)$th layer, and the output is a scalar with $m_L=1$.
	
	The most immediate way to approach uncertainty quantification for neural networks is to treat them as parametric models falling within the framework of M-estimators which we explore above. However, as we demonstrate, a number of technical issues lead IJ-based uncertainty quantification to have poor statistical performance.  Firstly, the structure of neural network is much more complex than traditional M-estimator models because of the nonlinear transformation with activation functions and the high dimensions of the parameter space. In addition, it is known that the optimization of neural network is quite challenging [\cite{sun2020optimization}] and often approached by using stochastic gradient descent and early stopping [\cite{bottou1991stochastic}]. Even when they provide good predictive performance, these methods normally fail to guarantee a global optimal solution. Thus the complex procedure of training neural network also introduces another source of procedural variability because of random initialization, mini-batch gradient descent and early stopping [\cite{huang2021quantifying}]. This makes the uncertainty quantification even more difficult.
	
	\subsection{Neural Networks as M-Estimators}
	
	
	In this paper, we will focus on a small neural network model with only one hidden layer and small number of hidden units. Instead of mini-batch gradient descent, we use the whole training dataset to calculate gradient for optimization and run enough epochs to ensure convergence. In addition, we also consider training the network weights from fixed initialization parameters to remove an important source of variability. Under these conditions, we can assume that the global optimal point can be attained and there is no source of procedural variability. We can then regard a small neural network as an M-estimator if a smooth (sigmoid) activation function is used. By using the calculation in \cref{subsec:ij_m}, we can calculate the IJ directional derivatives for neural networks. We explore the empirical performance of this application of IJ below. 
	
	We conduct a simulation experiment with \textsf{Tensorflow2.0} in python. In our experiments, we use both ReLU and sigmoid (logistic) activation functions. Note that ReLU is not a smooth function so it will violate IJ assumptions. However, as the most commonly used activation function, we will still examine the application of IJ. We also vary the number of hidden units in the range $\{1,3,5,10,20\}$ to see how the performance changes with the complexity of the neural network structure. During the process of optimization, we use the whole training dataset to calculate the gradient and the "Adam" algorithm  for optimization [\cite{kingma2014adam}]. The number of epochs is set to be 1000 and the learning rate is 0.01. In our experimental settings, these were found to reliably result in solutions that had converged to a local optimum.  In these experiments we observe that the Hessian matrix $\EE_{\hat{\DDD}} [\nabla_\theta^2 m(\hat{\theta},Z)]$ in equation (\ref{M_ij}) is generally ill-conditioned because of the numerical issues. To resolve this, we add a small identity matrix $0.001 * I$ to it before taking the inverse if its condition number is larger than $10^5$. We use the same settings as in \cref{sec:sim_frame}. We record both Monto Carlo variance (the sample variance across 200 repetitions) and the coverage of $E[\hat{f}(x)]$ for the predictors across the query points in order to investigate the performance of IJ in representing variance and the distribution of neural networks. These results are presented in Figure \ref{fig_nn}. 
	
	\begin{figure}[p!]
		\centering
		\begin{subfigure}[b]{0.9\textwidth}
			\centering
			\includegraphics[width=\textwidth]{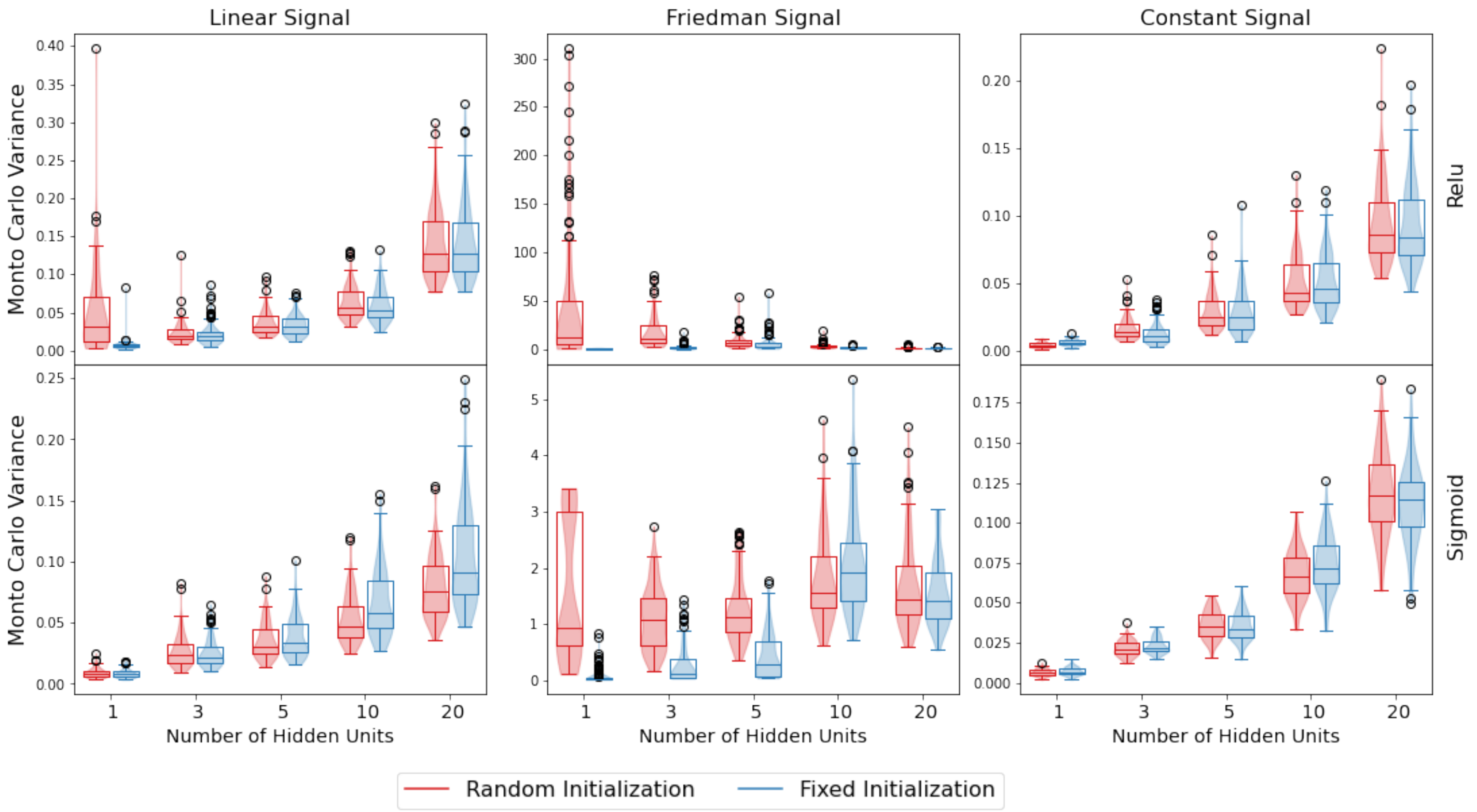}
			\caption{Boxplot and Violin Plot for Monte Carlo Variance.}
		\end{subfigure}
		\hfill
		\begin{subfigure}[b]{0.9\textwidth}
			\centering
			\includegraphics[width=\textwidth]{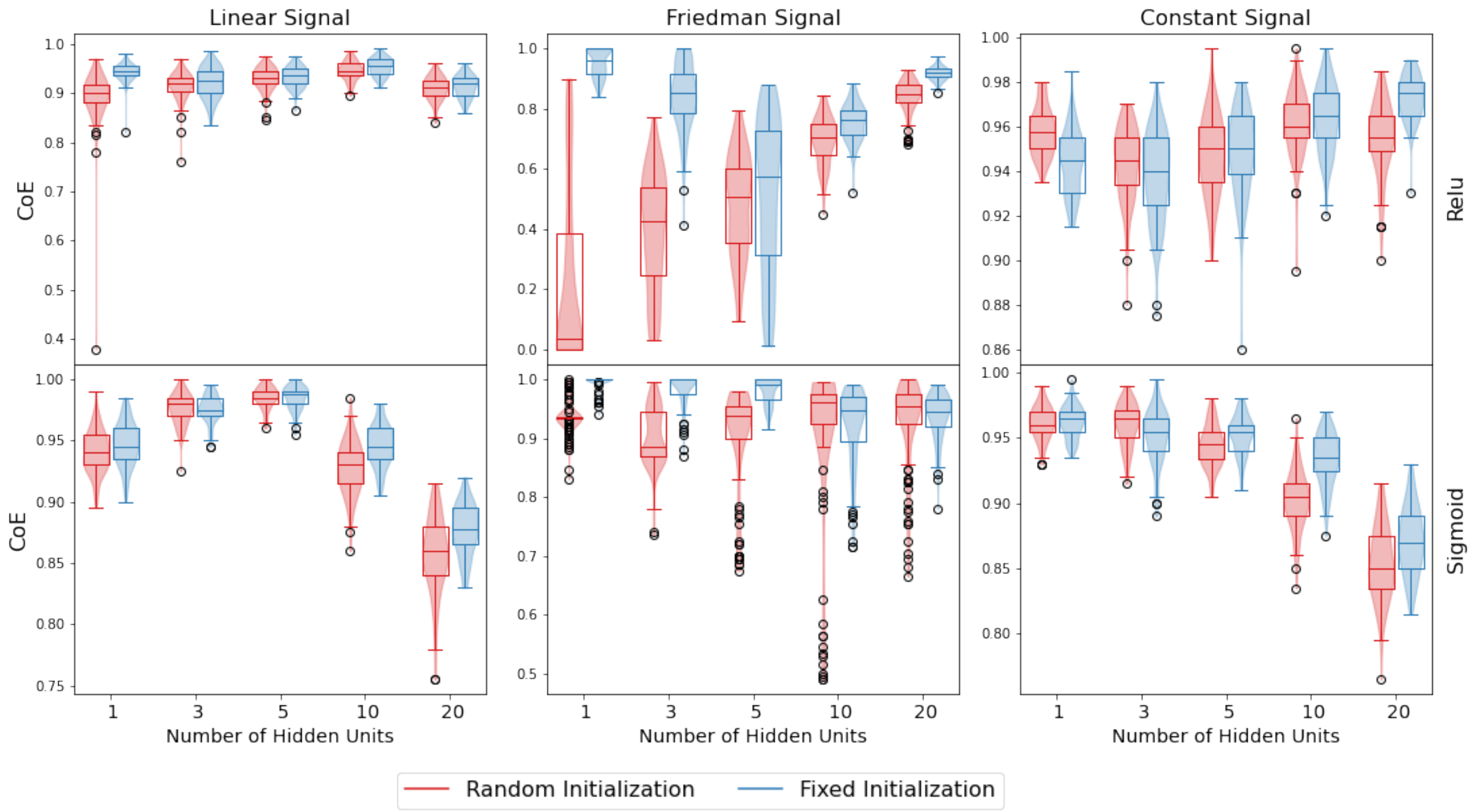}
			\caption{Boxplot and Violin Plot for CoE.}
		\end{subfigure}
		\captionsetup{width=0.9\textwidth}
		\caption{IJ of neural network with different numbers of hidden units. Three data generating processes and both ReLU and sigmoid activation functions are considered and we plot results over 100 query points. (a) gives Monto Carlo Variance, (b) records coverage of $E \hat{f}(x)$. Red indicates the training process with random initialization while blue is for the case when the initialization is fixed with a specific unchanged random seed over the replications. The numbers on the x-axis represents different number of hidden units of the neural network structure.} 
		\label{fig_nn}
	\end{figure}
	
	We first observe that for both ReLU and sigmoid activation functions, the Monto Carlo variance increases as the number of hidden units becomes larger under the Linear and Constant signals. This can be explained as the result of overfitting. In contrast, under the Friedman signal, if we use random initialization and sigmoid function, the Monte Carlo variance is extremely large when the number of hidden units is 1 and actually decreases with the number of hidden units. However, with fixed initialization, the Monte Carlo variance is always quite small. This behavior results from the lack of smoothness of the ReLU function which results in many local optima. As a result, even with very careful optimization procedures, different initializations result in different local optima. When the number of hidden units is increased, the difference of the predictors corresponding to each local optimal point becomes smaller and the procedure variability can be thus reduced. \textit{Starting the optimization from fixed initialization weights removes the effect, revealing initialization to be an important -- and difficult to quantify -- source of variability}. Unfortunately, there do not appear to be good non-random choices of initial weights.  This can also happen when we consider sigmoid activation functions under the Friedman signal. However, the effect is much smaller in this case, since the activation function is smoother, and the optimization process correspondingly less complex. 
	
	Examining the effect of different generating models on converage, it is not surprising to find that the ReLU activation function can achieve very good coverage for both Linear and Constant signals, because this activation function preserves the linear or constant forms of the function resulting in minimal bias. However, a model using ReLU activation exhibits serious undercoverage when we consider random initialization for the reasons discussed above; by fixing the initialization or increasing the number of hidden units, the coverage can be improved a lot. The sigmoid activation function  achieves reasonable coverage under the Friedman signal but suffers from undercoverage when the number of hidden units is increased under Linear and Constant signals. This can be explained by observing that when the number of hidden units is relatively large, many hidden nodes will produce outputs very close to zero or one after the sigmoid transformation. This can lead to an ill-conditioned Hessian matrix, which results in the high inaccuracy of the IJ estimator. 
	
	Overall, we only achieve  valid IJ estimates using the sigmoid activation function when the number of hidden units is quite small and thus the model is close to being parametric; larger networks resulting in numerical challenges. For ReLU activation functions, even though it can achieve promising performance for Linear and Constant signals, it fails to produce satisfactory coverage even for $E[\hat{f}(x)]$ for the Friedman signal. We speculate this is a result of the discontinuities in the derivative of the ReLU activation function; a violation of M-estimator assumptions. For a small network the resulting discontinuities affect only a small enough fraction of the data to be ignored; as the size of the network grows the set of resulting discontinuities becomes dense in the data, rendering variation in the IJ tangent plane a poor approximation to sample variance.  More work is needed to quantify the procedure variability introduced by complex neural network structure and the numerical issue of the ill-conditioned Hessian matrix should also be further carefully studied.

	\subsection{Model Comparisons with Neural Network} 
	
	As a final exercise, we compare neural network with multiple random forests. A small neural network with one hidden layer of 5 units and sigmoid activation function (for which IJ has reasonable performance) is considered. The result is presented in Figure \ref{fig_com_nn}. Here we see an obvious difference between neural network and random forests under Friedman signal while both of them are quite similar under the other two. Here constraining the maximum depth of trees likely leads to additional bias in the random forest and high power to detect differences between the two model classes. 
	
	\begin{figure}[ht!]
		\centering
		\includegraphics[width=0.9\textwidth]{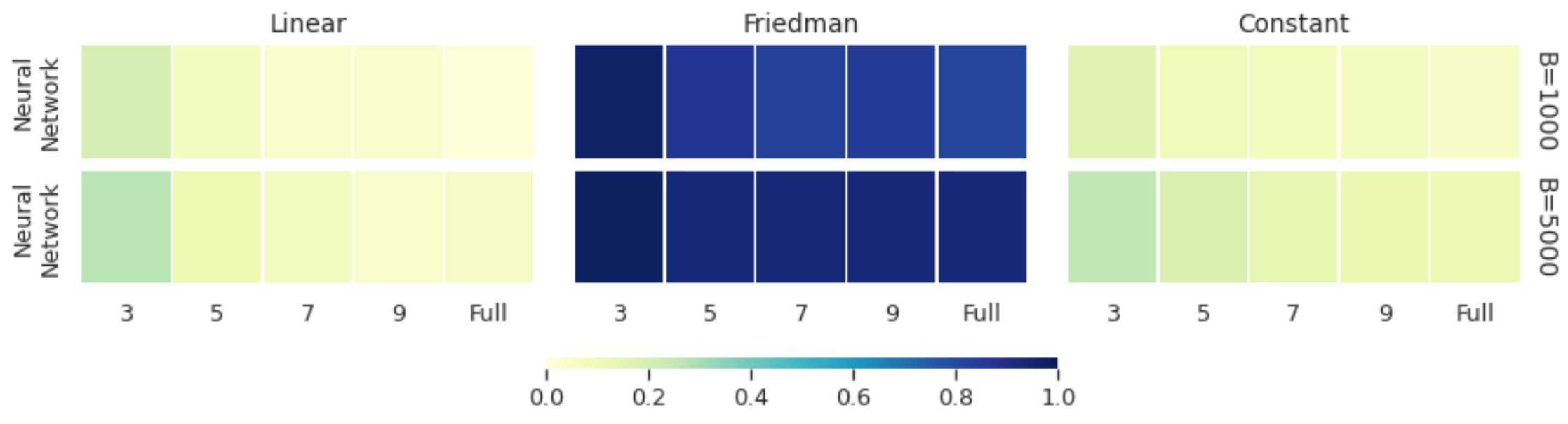}		
		\captionsetup{width=0.9\textwidth}
		\caption{Model Comparisons of multiple random forests and neural network. A small neural network with one hidden layer of 5 units and sigmoid activation function is used. The rest of the setup is the same as Figure \ref{fig_com_rf}.}
		\label{fig_com_nn}
	\end{figure}

	\section{Ensemble Models as V-statistics} \label{sec:v-stat}
	
	Section \cref{subsec:rf_bias} introduced IJ calculations for random forests. In this section we observe that the IJ derivation applies to any ensemble obtained from subsamples taken with replacement.  In particular, this represents an alternative to bootstrap methods for uncertainty quantification; we can employ an ensemble of models which generally exhibit lower variance, and provide uncertainty quantification with little additional cost. Recall that for random forests model, we use $T_b(x)$ to represent the prediction of the tree at the data point $x$, where this tree is trained on the $b$th subsample through bagging process. Generally, we can apply this framework to any other models by denoting $T_b(x)$ as the prediction of the model at the data point $x$, where the model is trained on the $b$\textsuperscript{th} subsample. Following exactly the procedure in \cref{subsec:rf_bias}, we can calculate the covariance matrix and directional derivatives for uncertainty quantification or model comparisons. We advocate for using this approach over bootstrapping for uncertainty quantification (i.e. building one model on the whole data set and using bootstrapped models to produce confidence intervals) under the argument that this allows you to use the ensemble structure to stabilize prediction {\em as well as} provide variance estimates. 
	

	In the following, we use an example of XGBoost for demonstrations. XGBoost (Extreme Gradient Boosting)  is a scalable, distributed gradient-boosted decision tree machine learning method that has gained great popularity and attention recently in the field of machine learning. We obtain an "ensemble XGBoost" model by training multiple XGBoost models, each on a subsample taken with replacement, falling within the framework of V-statistics.  We implemented this using \textsf{xgboost} in python with its default hyperparameter settings: 100 trees each of depth 6, step shrinkage parameter $\eta = 0.3$ and no subsampling within the XGBoost process. Our ensemble is generated from 1000 subsamples, each of size 200. In terms of the notation above each $T_b$ is an XGBoost model. The simulation settings in \cref{sec:sim_frame} were employed to calculate coverage of $E[\hat{f}(x)]$ with confidence intervals from ensemble XGBoost and compare it with multiple random forests models. We plot the results in Figure \ref{fig_xgboost_ci}, which demonstrates a very good performance for the coverage. In Figure \ref{fig_xgboost_comp}, we observe that for the Linear and Constant signals, the difference between ensemble XGBoost and random forests is much less pronounced. For the Friedman signal, the difference is quite obvious, especially when the maximum depth is small. We can expect that as the maximum depth increases, random forest will tend to behave similarily to ensemble XGBoost. 
	
	\begin{figure}[ht!]
		\centering
		\includegraphics[width=0.75\textwidth]{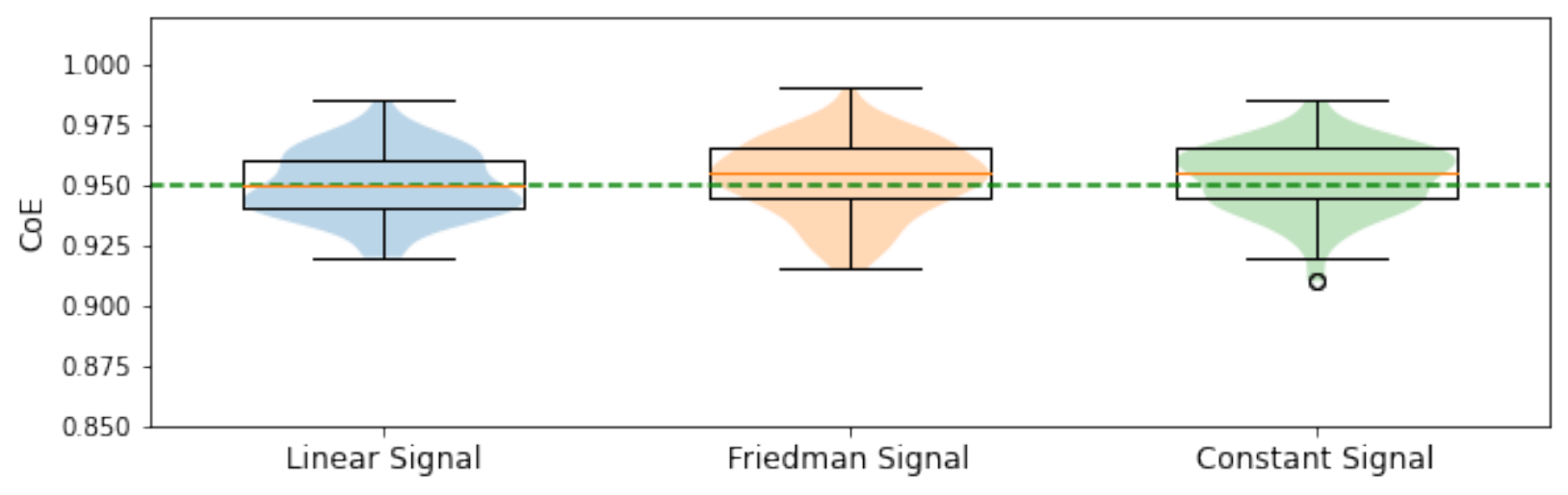}
		\captionsetup{width=0.9\textwidth}
		\caption{Coverage of $E \hat{f}(x)$ using 95\% intervals for ensemble XGBoost over 100 query points. The number of subsamples is set to be 1000 with size 200.} \label{fig_xgboost_ci}
	\end{figure}
	
	\begin{figure}[ht!]
		\centering
		\includegraphics[width=1\textwidth]{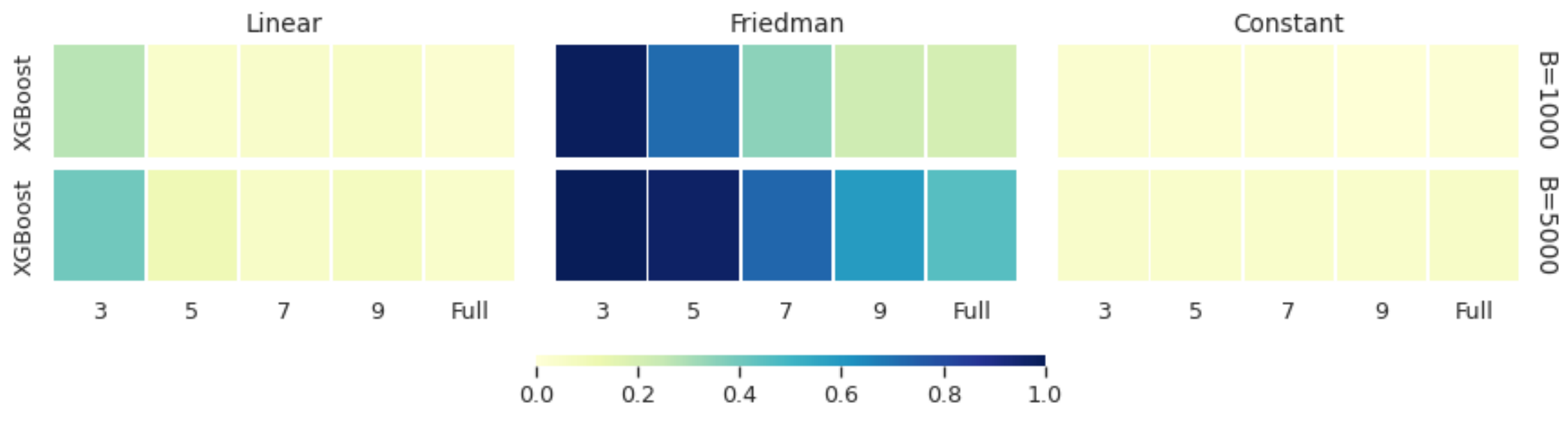}
		\captionsetup{width=0.9\textwidth}
		\caption{Model Comparisons of multiple random forests and ensemble XGBoost. The rest of the setup is the same as Figure \ref{fig_com_rf}.} \label{fig_xgboost_comp}
	\end{figure}

	\section{Real Data Analysis} \label{sec:real}
	
	We study the practical application of our methods to Beijing Housing Data [\cite{house}]. These data contain the price of Beijing houses from 2011 to 2017, (which was scraped from \url{www.lianjia.com}), along with covariates including location, size, building features, age and configuration. We pre-process the data by removing the rows with missing values and dropping the columns of url, transaction and community id, and the average price per square ft (since this can be calculated by the other two columns: total price and the square ft area of house). The resulting dataset has dimension $15360 \times 51$ after converting all categorical features to dummy variables. We use the logarithm of the total price as the outcome to be predicted from the remaining covariates. The size of these data allows us to explore the coverage properties of IJ-derived intervals in a real-world setting. Specifically, we randomly split the whole data into 50 disjoint training sets $\{\mathcal{D}_i\}_{i=1}^{50}$ each of size 3000. We can now build models with associated intervals on each of these independent training sets and examine whether the intervals cover the predictions of independently-generated models. To do so, we reserve the remaining 3260 as a test set from which we randomly draw 100 samples to use as query points on which we examine the performance of confidence intervals. 
	
	For this section, we consider ``reproduction interval'' as used in \cite{zhou2022boulevard}.  Rather than being defined so as to cover a population quantity, these are defined to cover the prediction that would be made by applying the same learning procedure on an independent sample.  That is, if we train models $\hat{f}_1(x), \hat{f}_2(x)$ using the same process on independent data sets such that each is distributed as $N(\EE[\hat{f}_i(x)], V(x))$ 
	then $\hat{f}_1(x) - \hat{f}_2(x) \sim N(0, 2V(x))$. Thus we can define an interval
	\[
	I(x) = \left[ \hat{f}_1(x) - \Phi^{-1}(0.975) \sqrt{ 2 \hat{V}(x)}, \hat{f}_1(x) + \Phi^{-1}(0.975) \sqrt{2 \hat{V}(x)} \right]
	\]
	which should cover 95\% of $\hat{f}_2(x)$ replicates of the same process. 
	
	For our assessment, at each query point we construct a reproduction interval $I_i(x)$ for each $D_i$. We then measure the proportion of replicates falling within $I_i(x)$ and report the {\em Coverage of Reproduction} (CoR) by averaging these proportions:
	\[
	\mbox{CoR}(x) = \frac{1}{50 \times 49} \sum_{i=1}^{50} \sum_{j \neq i} ( \hat{f}_j(x) \in I_i(x) ).
	\]
	
	We also evaluate the mean square error (MSE) for each estimated model $\hat{f}_i$ on the testing set, denoted by $e_i$. Then we report the average MSE $\frac{1}{50} \sum_{i=1}^{50}e_i$ of all the models fitted on each training segment.

	We consider five different models: (1) LM (linear model), (2) LM+RF (boost linear model with a random forest), (3) RF (random forest), (4) RF+RF (boost a random forest with another random forest) (5) XGB (ensemble XGBoost). We use random forests with full depth and 1000 trees here and report the boxplot and violin plot of CoR and MSE of each model. The result is presented in Figure \ref{fig:sub1}. We can observe that all these models achieve sufficient coverage and the models that involve random forests tend to be more conservative for CoR. As a byproduct, for a fixed query point, we plot the confidence intervals of predictions by the models fitted on one specific segment of the training data shown in Figure \ref{fig:sub2}. In addition, we also implement the model comparisons test for each pair of model predictions and use the horizontal lines to  group the pairs of model predictions which are not statistically distinguishable. We observe that the prediction of random forests is relatively similar with ensemble XGBoost, boosted random forests and modified random forests but that linear models do appear to give a statistically distinct prediction, even when later modified by a random forest. 
	
	\begin{figure}[ht!]
		\begin{subfigure}{.5\textwidth}
			\centering
			\includegraphics[width=0.95\textwidth]{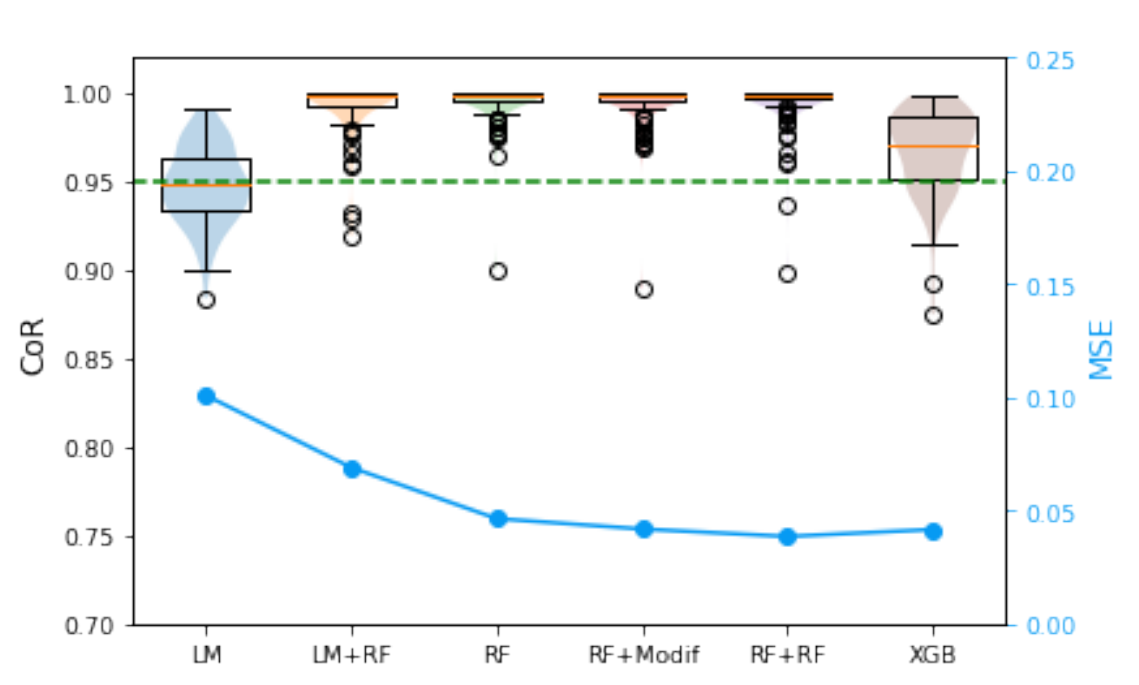}
			\caption{CoR and MSE}
			\label{fig:sub1}
		\end{subfigure}
		\hspace{1em}
		\begin{subfigure}{.4\textwidth}
			\centering
			\includegraphics[width=1\textwidth]{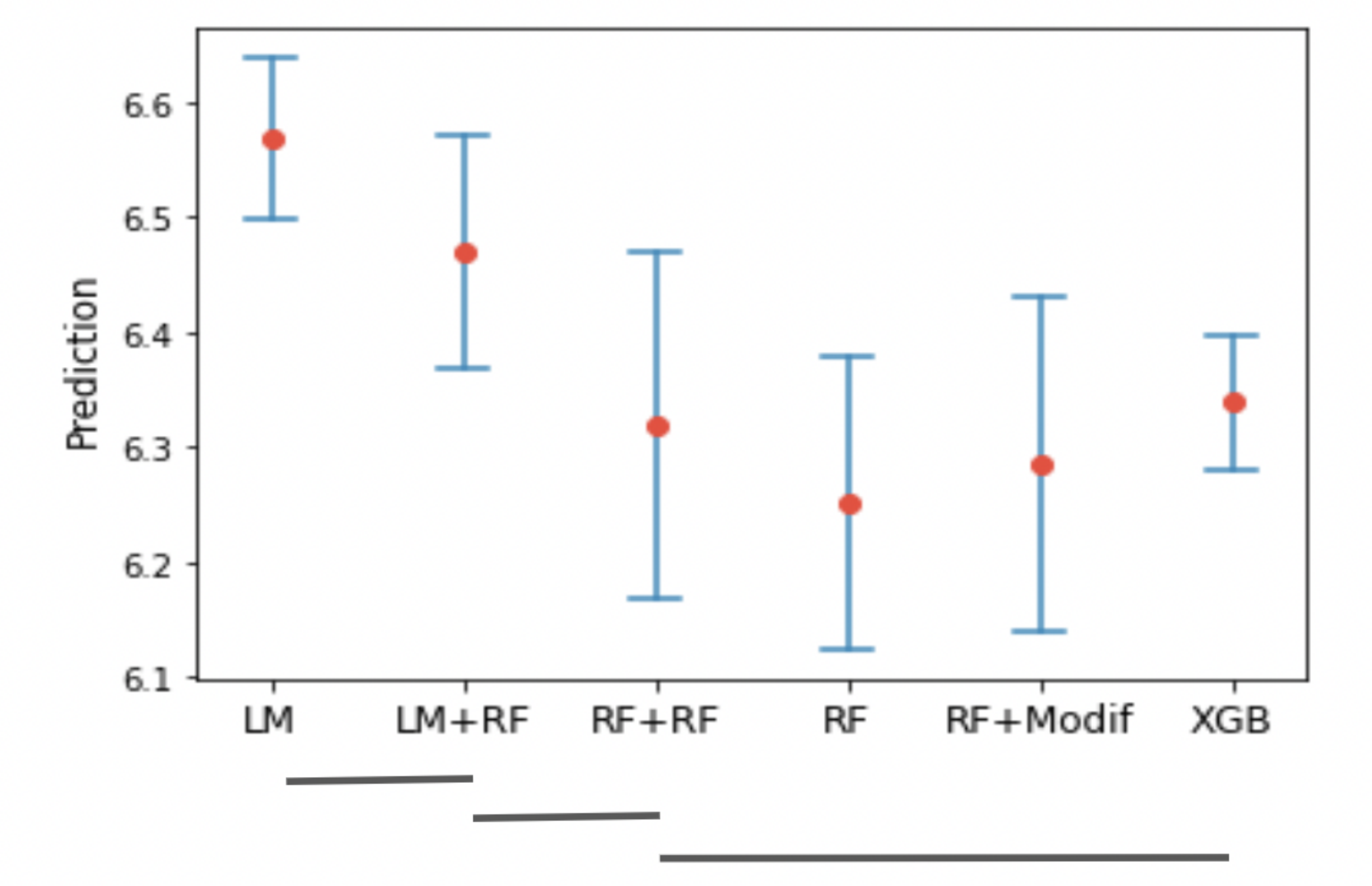}
			\caption{Model Comparisons}
			\label{fig:sub2}
		\end{subfigure}
		\captionsetup{width=0.9\textwidth}
		\caption{Analysis of Beijing Housing Data. (a) boxplot and violin plot of CoR and MSE of each model. x-axis represents five different model combinations: LM (linear model), LM+RF (boost linear model with a random forest),  RF (random forest), RF+RF (boost a random forest with another random forest), and XGB (ensemble XGBoost). (b) confidence intervals of predictions for these different model combinations. The bottom horizontal lines group the pairs of model predictions which are not statistically distinguishable.}
	\end{figure}

	\section{Discussions} \label{sec:dis}
	In this paper we revisited the classical but underused IJ method, first described in \cite{efron1982jackknife} to derive variance estimates. The underlying IJ directional derivatives are highly versatile and can be used to construct estimates of the covariance between two different models trained on the same data. With this we devised a statistical test to quantify how exactly a model differs from another. In the case of ensemble methods, IJ is biased upwards. However bias corrections can be successfully derived for ensembles obtained by subsampling with replacement.

	We demonstrated that IJ is an efficient method for model comparisons or providing uncertainty estimates for
	combinations of models in the case of random forests or M estimators, where the specific expressions for directional derivatives are derived. However, as for more general machine learning models, we propose to emsemble them as a V-statistic, which provides a generic means of producing uncertainty quantification.
	
	There are also some ideas related to the content of this paper that could be explored in the future. Even though we only worked with Gaussian responses in this paper, the methods can be readily applied to non-Gaussian responses from an exponential family. The software corresponding to this paper can work with such data, where the random forest ``boosts'' are fitted on pseudo-residuals as defined in \cite{ghosal2021generalised}.
	
	Closed forms for the IJ could also be explored for any general V-statistic or more general ensemble methods. For any two given models proving consistency of the IJ covariance estimate (or disproving it or constructing suitable regularity conditions) is also a theoretical challenge.
	
	Note that we have ignored the uncertainty in the estimate $\hat{\Sigma}$ throughout. If there are a large number of query points, IJ estimates of the covariance matrix $\Sigma$ could be very unstable. A potential future direction would be to characterize the distribution of this estimate, especially for random forests covariances, and modify our uncertainty quantification correspondingly.
	
	We could also extend the boosting framework discussed in \cref{sec:v-stat} to work with ensembles where the contribution from each boosting stage may be random or vary during the training itself - for example the Boulevard boosting model where the weight for each tree depends on the eventual total (random) number of trees [\cite{zhou2022boulevard}] and it keeps changing during training as more trees are added to the boosting ensemble.
	
	Finally, we also need to resolve the problem of calculating IJ directional derivatives of neural networks. Our simulations show many challenges to a na\"{i}ve framework for this, including the technical issues of optimization, numerical issues of sigmoid activation functions in larger number of hidden units. Overcoming these requires either a more direct calculation of directional derivatives, possibly through automatic differentiation, or a new representation of neural network training.

	\baselineskip=21pt
	\bibliographystyle{chicago}
	\bibliography{paper_v5}
	
	\afterpage{\clearpage}
	\newpage
	
	\appendix
	
	\section{Asymptotics for M-estimators} \label{sec:MestBasics}
	
	\paragraph{The Jacobian} Suppose $f: \RR^{m+n} \to \RR^\ell$ is a differentiable function. Then for any $(x_0,y_0) \in \RR^{m+n}$ (where $x_0 \in \RR^m, y_0 \in \RR^n$) the (partial) Jacobian is a matrix [in $\RR^{\ell \times m}$] defined by
	\begin{align*}
		J_{f,x}(x_0, y_0) &= \begin{pmatrix} \frac{\partial f}{\partial x_1}(x_0, y_0) \cdots \frac{\partial f}{\partial x_m}(x_0, y_0) \end{pmatrix} \\
		&= \begin{pmatrix} (\nabla_x^\top f_1)(x_0, y_0) \\ \vdots \\ (\nabla_x^\top f_\ell)(x_0, y_0) \end{pmatrix} \\
		&= \left(\left(\frac{\partial f_i}{\partial x_j}(x_0,y_0)\right)\right)_{\substack{i=1,\dots,\ell \\ j=1,\dots,m}} 
	\end{align*}
	
	\paragraph{The Delta Method} Suppose $X, \{X_n\}_{n=1}^\infty$ are random variables in $\RR^m$, $c \in \RR^m$ is constant and $r_n \to \infty$ is a sequence such that $r_n(X_n - c) \xrightarrow{d} X$. Also $f: \RR^m \to \RR^\ell$ is a differentiable function with Jacobian $J_g$. Then by the Taylor expansion
	\begin{align*}
		f(X_n) &= f(c) + J_f(\tilde{c}) (X_n - c), \\
		\text{where } &\|\tilde{c} - c\| \leq \|X_n - c\| \\
		\implies r_n(f(X_n) - f(c)) &= J_f(\tilde{c}) \cdot r_n(X_n - c)
	\end{align*}
	Now by Slutsky's theorem $X_n - c = r_n(X_n - c) \cdot \frac1{r_n} \xrightarrow{c} X \cdot 0 = 0$, i.e, $X_n \xrightarrow{d} c \implies \tilde{c} \xrightarrow{d} c \implies J_f(\tilde{c}) \xrightarrow{d} J_f(c)$. Applying Slutsky's theorem again we see that
	\begin{align*}
		J_f(\tilde{c}) \cdot r_n(X_n - c) &\xrightarrow{d} J_f(c) X \\
		r_n(f(X_n) - f(c)) &\xrightarrow{d} J_f(c) X
	\end{align*}
	
	Now suppose we define the M-estimator to be $\hat{\theta} = \argmax_{\theta \in \Theta} \EE_{\hat{\DDD}} [m(\theta, Z)]$ where $\Theta \subseteq \RR^p$ is the parameter space and the data $\{Z_i\}_{i=1}^n$ are independently generated from the distribution $\DDD$ over $\ZZZ \subseteq \RR^d$. We assume $m$ has a continuous second derivative with respect to $\theta$. Also $\hat{\DDD}$ is the empirical distribution over the training data $(Z_i)_{i=1}^n$ from $\ZZZ^n$, thereby making $\EE_{\hat{\DDD}} [m(\theta, Z)] = \frac1{n} \sum_{i=1}^n m(\theta, Z_i)$. Finally let $\theta_0 = \argmax_{\theta \in \Theta} \EE_{\DDD} [m(\theta, Z)]$ be the real value of the M-estimate.
	
	\paragraph{Asymptotic Consistency} Assume the following conditions
	\begin{align*}
		& \sup_{\theta \in \Theta} \left| \EE_{\hat{\DDD}} [m(\theta, Z)] - \EE_{\DDD} [m(\theta, Z)] \right| \xrightarrow{p} 0 \\
		& \sup_{\theta:\|\theta - \theta_0\| > \epsilon} \EE_{\DDD} [m(\theta, Z)] < \EE_{\DDD} [m(\theta_0, Z)], \;\forall\, \epsilon > 0 \\
		& \EE_{\hat{\DDD}}[m(\hat{\theta}, Z)] \geq \EE_{\hat{\DDD}}[m(\theta_0, Z)] - o_P(1)
	\end{align*}
	The first assumption states that the law of large numbers is uniform over $\Theta$ and the second one states that the global extremum $\theta_0$ is well-separated. Then from the second assumption $\forall\, \epsilon > 0$, $\exists\, \delta > 0$ such that 
	\begin{align*}
		&P(\|\hat{\theta} - \theta_0\| \geq \epsilon) \\
		&\leq P(\EE_{\DDD} [m(\theta_0, Z)] - \EE_{\DDD} [m(\hat{\theta}, Z)] > \delta) \\
		&= P\left( (\EE_{\DDD} [m(\theta_0, Z)] - \EE_{\hat{\DDD}} [m(\theta_0, Z)]) + (\EE_{\hat{\DDD}} [m(\theta_0, Z)] - \EE_{\hat{\DDD}} [m(\hat{\theta}, Z)]) \right. \\
		&\qquad\qquad \left. + (\EE_{\hat{\DDD}} [m(\hat{\theta}, Z)] - \EE_{\DDD} [m(\hat{\theta}, Z)]) \geq \delta \right) \\
		&\leq P(\EE_{\DDD} [m(\theta_0, Z)] - \EE_{\hat{\DDD}} [m(\theta_0, Z)] \geq \delta/3) + P(\EE_{\hat{\DDD}} [m(\theta_0, Z)] - \EE_{\hat{\DDD}} [m(\hat{\theta}, Z)] \geq \delta/3) \\
		&\qquad\qquad + P(\EE_{\hat{\DDD}} [m(\hat{\theta}, Z)] - \EE_{\DDD} [m(\hat{\theta}, Z)] \geq \delta/3)
	\end{align*}
	Now it is easily seen that by the first and third terms above has a limit of 0 (by the first assumption) and the second term also has a limit of 0 (by the third assumption). Thus $\forall \, \epsilon > 0$, $P(\|\hat{\theta} - \theta_0\| \geq \epsilon) \to 0$, i.e., $\hat{\theta} \to \theta$ is probability and thus almost everywhere.
	
	We can also do a similar proof of consistency for the corresponding Z-estimators. Define $g(\theta,Z) = \nabla_\theta m(\theta, Z)$ and suppose $\hat{\theta}$ is an (approximate) solution of $\EE_{\hat{\DDD}}[g(\theta, Z)] = 0$ and $\theta_0$ is an (exact) solution of $\EE_{\DDD}[g(\theta, Z)] = 0$ satisfying the following conditions
	\begin{align*}
		& \sup_{\theta \in \Theta} \| \EE_{\hat{\DDD}} [g(\theta, Z)] - \EE_{\DDD} [g(\theta, Z)] \| \xrightarrow{p} 0 \\
		& \sup_{\theta:\|\theta - \theta_0\| > \epsilon} \|\EE_{\DDD} [g(\theta, Z)] \| > 0, \;\forall\, \epsilon > 0 \\
		& \| \EE_{\hat{\DDD}} [g(\hat{\theta}, Z)] - \EE_{\hat{\DDD}} [g(\theta_0, Z)] \| \xrightarrow{p} 0
	\end{align*}
	Then by the second assumption $\forall\, \epsilon > 0$, $\exists\, \delta > 0$ such that
	\begin{align*}
		&P(\|\hat{\theta} - \theta_0\| \geq \epsilon) \leq P(\|\EE_{\DDD} [g(\hat{\theta}, Z)] \| > \delta) \\
		&= P\bigg( (\|\EE_{\DDD} [g(\hat{\theta}, Z)] \| - \|\EE_{\hat{\DDD}} [g(\hat{\theta}, Z)] \|) + (\|\EE_{\hat{\DDD}} [g(\hat{\theta}, Z)] \| - \|\EE_{\hat{\DDD}} [g(\theta_0, Z)] \|) \bigg. \\
		&\qquad\qquad \bigg. + (\|\EE_{\hat{\DDD}} [g(\theta_0, Z)] \| - \|\EE_{\DDD} [g(\theta_0, Z)] \|) \geq \delta \bigg) \\
		&\leq P(\|\EE_{\DDD} [g(\hat{\theta}, Z)] - \EE_{\hat{\DDD}} [g(\hat{\theta}, Z)] \| \geq \delta/3) + P(\|\EE_{\hat{\DDD}} [g(\hat{\theta}, Z)] - \EE_{\hat{\DDD}} [g(\theta_0, Z)] \| \geq \delta/3) \\
		&\qquad\qquad + P(\|\EE_{\hat{\DDD}} [g(\theta_0, Z)] - \EE_{\DDD} [g(\theta_0, Z)] \| \geq \delta/3) \\
		&\qquad\qquad\qquad[\because \|a\|-\|b\| \geq x \implies \|a-b\| \geq x]
	\end{align*}
	Now it is easily seen that by the first and third terms above has a limit of 0 (by the first assumption) and the second term also has a limit of 0 (by the third assumption). Thus $\forall \, \epsilon > 0$, $P(\|\hat{\theta} - \theta_0\| \geq \epsilon) \to 0$, i.e., $\hat{\theta} \to \theta$ is probability and thus almost everywhere.
	
	Note that if we assume $\hat{\theta}$ is always an exact solution of $\EE_{\hat{\DDD}}[g(\theta, Z)] = 0$ then we can drop the third assumption and the proof is also simplified as follows - by the second assumption $\forall\, \epsilon > 0$, $\exists\, \delta > 0$ such that
	\begin{align*}
		&P(\|\hat{\theta} - \theta_0\| \geq \epsilon) \\
		&\leq P(\|\EE_{\DDD} [g(\hat{\theta}, Z)] \| > \delta) \\
		&= P\left( \|\EE_{\DDD} [g(\hat{\theta}, Z)] \| - \|\EE_{\DDD} [g(\theta_0, Z)] \|) \geq \delta \right) \\
		&\leq P(\|\EE_{\DDD} [g(\hat{\theta}, Z)] - \EE_{\DDD} [g(\theta_0, Z)] \| \geq \delta) \qquad[\because \|a\|-\|b\| \geq x \implies \|a-b\| \geq x]
	\end{align*}
	and this term has a limit of 0 by the first condition.

	\paragraph{Asymptotic Normality}
	
	We know that $\hat{\theta} = \argmax_{\theta \in \Theta} \EE_{\hat{\DDD}} [m(\theta, Z)]$ also satisfies the equation $\EE_{\hat{\DDD}} [\nabla_\theta m(\theta, Z)] = 0$. Thus
	\begin{align*}
		0 &= \EE_{\hat{\DDD}} [\nabla_\theta m(\hat{\theta}, Z)] \\
		&= \EE_{\hat{\DDD}} [\nabla_\theta m(\theta_0, Z)] + \EE_{\hat{\DDD}} \left[ J_{\nabla_\theta m, \theta} (\tilde{\theta}, Z) \right] (\hat{\theta} - \theta_0) \\
		& \text{where } \|\tilde{\theta} - \theta_0\| \leq \|\hat{\theta} - \theta_0\| \\
		\implies \hat{\theta} - \theta_0 &= -\left( \EE_{\hat{\DDD}} \left[ J_{\nabla_\theta m, \theta} (\tilde{\theta}, Z) \right] \right)^{-1} \EE_{\hat{\DDD}} [\nabla_\theta m(\theta_0, Z)]
	\end{align*}
	Now $J_{\nabla_\theta m, \theta} = \nabla_\theta^2 m$. Also
	\begin{align*}
		\EE_{\hat{\DDD}} \left[ \nabla_\theta^2 m(\tilde{\theta}, Z) \right] &\xrightarrow{\text{a.e.}} \EE_{\DDD} \left[ \nabla_\theta^2 m(\tilde{\theta}, Z) \right] \qquad [\text{by the uniform strong law of large numbers}] \\
		&\xrightarrow{\text{a.e.}} \EE_{\DDD} \left[ \nabla_\theta^2 m(\theta_0, Z) \right] \qquad [\text{by continuity of } \nabla_\theta^2 m] \\
	\end{align*}
	And by the central limit theorem 
	\begin{align*}
		\sqrt{n} \left( \EE_{\hat{\DDD}} [\nabla_\theta m(\theta_0, Z)] - \EE_{\DDD} [\nabla_\theta m(\theta_0, Z)] \right) \sim N_p(0, Var_{\DDD} [\nabla_\theta m(\theta_0, Z)])
	\end{align*}
	where $\EE_{\DDD} [\nabla_\theta m(\theta_0, Z)] = 0$ and $Var_{\DDD} [\nabla_\theta m(\theta_0, Z)]$ is a $p \times p$ matrix whose elements are 
	\begin{align*}
		\left[Var_{\DDD} [\nabla_\theta m(\theta_0, Z)]\right]_{ij} &= Cov_{\DDD} \left[ \frac{\partial m}{\partial \theta_i}(\theta_0, Z), \frac{\partial m}{\partial \theta_j}(\theta_0, Z) \right] \\
		&= \EE_{\DDD} \left[ \left(\frac{\partial m}{\partial \theta_i} \cdot \frac{\partial m}{\partial \theta_j}\right)(\theta_0, Z) \right], \;\; i,j = 1, \dots, p \\
		\implies Var_{\DDD} [\nabla_\theta m(\theta_0, Z)] &= \EE_{\DDD} [(\nabla_\theta m \nabla_\theta^\top m)(\theta_0, Z)]
	\end{align*}
	Finally we can apply Slutsky's theorem to obtain
	\begin{align*}
		\sqrt{n} (\hat{\theta} - \theta_0) &\xrightarrow{d} N_p(0, \Sigma(\theta_0)), \\
		\text{ where }& \Sigma(\theta) = \left(\EE_{\DDD} \left[ \nabla_\theta^2 m(\theta, Z) \right]\right)^{-1} \EE_{\DDD} [(\nabla_\theta m \nabla_\theta^\top m)(\theta, Z)] \left(\EE_{\DDD} \left[ \nabla_\theta^2 m(\theta, Z) \right]\right)^{-1}
	\end{align*}
	
	If the prediction function corresponding to any parameter $\theta \in \Theta$ and query point $x \in \ZZZ$ is $\eta(\theta, x)$ then by the delta method
	$$
	\sqrt{n} (\eta(\hat{\theta},x) - \eta(\theta_0,x)) \xrightarrow{d} N_p(0, V(x)), \text{ where } V(x) = \nabla_\theta^\top \eta(\theta_0,x) \Sigma(\theta_0) \nabla_\theta \eta(\theta_0,x)
	$$
	
	\section{Directional derivatives for M-estimators} \label{sec:MestIJ}
	
	Given the training dataset $(Z_i)_{i=1}^n$ and any probability vector $P$ in $\RR^n$ define 
	$$
	\EE_{\hat{\DDD}(P)}[m(\theta, Z)] = \sum_{k=1}^n P_k m(\theta, Z_k)
	$$
	Now for some $i \in \{1,\dots,n\}$ and $\epsilon > 0$ if $P = P(i,\epsilon)$ is given by $P_k = (1-\epsilon)\frac1{n} + \epsilon \ind_{k=i}$ then define
	$$
	\hat{\theta}(i,\epsilon) = \argmax_{\theta \in \Theta} \EE_{\hat{\DDD}(P(i,\epsilon))}[m(\theta, Z)]
	$$
	Thus $\hat{\theta}(i,\epsilon)$ satisfies $\EE_{\hat{\DDD}(P(i,\epsilon))}[\nabla_\theta m(\theta, Z)] = 0$. Fixing $i$, if we can show that $\hat{\theta}(i,\epsilon)$ has a derivative at $\theta = 0$ then $U_{\hat{\theta},i} = \frac{\partial \hat{\theta}}{\partial \epsilon} (i,0)$ will be the directional derivative for the M-estimator. To show this suppose $f:\Theta \times \RR_+ \to \RR^p$ is given by
	\begin{align*}
		f(\theta, \epsilon) &= \EE_{\hat{\DDD}(P(i,\epsilon))}[\nabla_\theta m(\theta, Z)] \\
		&= \sum_{k=1}^n P_k \nabla_\theta m(\theta, Z_k) \\
		&= \sum_{k=1}^n \left((1-\epsilon)\frac1{n} + \epsilon \ind_{k=i}\right) \nabla_\theta m(\theta, Z_k)
	\end{align*}
	Now we know that $f(\hat{\theta},0) = 0$ and we will assume that the Jacobian at $(\hat{\theta},0)$, given by $J_{f,\theta}(\hat{\theta},0) = \EE_{\hat{\DDD}} [\nabla_\theta^2 m(\hat{\theta})]$, is invertible. Then by the implicit function theorem there exists an open set $S \ni 0$ and a continuously differentiable function $h:\RR_+ \to \RR^p$ such that $f(h(\epsilon), \epsilon) = 0$ for all $\epsilon \in S$. So we can express $\hat{\theta}(i,\epsilon) = h(\epsilon)$ making $h(0) = \hat{\theta}$. Further the derivative of $h$ is given by $\frac{\partial h}{\partial \epsilon}(\epsilon) = - \left(J_{f,\theta}(h(\epsilon),\epsilon)\right)^{-1} \frac{\partial f}{\partial \epsilon}(h(\epsilon),\epsilon)$ over $S$. Hence when $\epsilon = 0$, we get the directional derivative to be
	\begin{align*}
		U_{\hat{\theta},i} &= \frac{\partial \hat{\theta}}{\partial \epsilon} (i,0) = \frac{\partial h}{\partial \epsilon}(0) = - \left(J_{f,\theta}(h(0),0)\right)^{-1} \frac{\partial f}{\partial \epsilon}(h(0),0) \\
		&= - \left(J_{f,\theta}(\hat{\theta},0)\right)^{-1} \frac{\partial f}{\partial \epsilon}(\hat{\theta},0) = - \left(\EE_{\hat{\DDD}} [\nabla_\theta^2 m(\hat{\theta})]\right)^{-1} \frac{\partial f}{\partial \epsilon}(\hat{\theta},0) \\
		&= - \left(\EE_{\hat{\DDD}} [\nabla_\theta^2 m(\hat{\theta})]\right)^{-1} \left( \sum_{k=1}^n \left(\ind_{k=i} - \frac1{n} \right) \nabla_\theta m(\hat{\theta}, Z_k) \right) \\
		&= - \left(\EE_{\hat{\DDD}} [\nabla_\theta^2 m(\hat{\theta})]\right)^{-1} \left( \nabla_\theta m(\hat{\theta}, Z_i) - \EE_{\hat{\DDD}} \left[ \nabla_\theta m(\hat{\theta}, Z)\right] \right) \\
		&= - \left(\EE_{\hat{\DDD}} [\nabla_\theta^2 m(\hat{\theta})]\right)^{-1} \nabla_\theta m(\hat{\theta}, Z_i)
	\end{align*}
	
	Finally if the prediction function corresponding to any parameter $\theta \in \Theta$ and query point $x \in \ZZZ$ is $\eta(\theta, x)$ then by the delta method we define its directional derivative to be
	$$
	U_i(x) = \nabla_\theta^\top \eta(\hat{\theta}, x) U_{\hat{\theta},i} = - \nabla_\theta^\top \eta(\hat{\theta}, x) \left(\EE_{\hat{\DDD}} [\nabla_\theta^2 m(\hat{\theta})]\right)^{-1} \nabla_\theta m(\hat{\theta}, Z_i)
	$$
	
	\section{Consistency of the IJ} \label{sec:IJcons}
	
	\paragraph{Consistency of the IJ variance estimate}
	
	We see that the IJ variance estimate for $\sqrt{n} \cdot \eta(\hat{\theta}, x)$ is
	\begin{align*}
		&n \cdot \frac1{n^2} \sum_{i=1}^n U_i(x)^2 \\
		&= \frac1{n} \sum_{i=1}^n \nabla_\theta^\top \eta(\hat{\theta}, x) U_{\hat{\theta},i} U_{\hat{\theta},i}^\top \nabla_\theta \eta(\hat{\theta}, x) \\
		&= \frac1{n} \nabla_\theta^\top \eta(\hat{\theta}, x) \left(\sum_{i=1}^n U_{\hat{\theta},i} U_{\hat{\theta},i}^\top\right) \nabla_\theta \eta(\hat{\theta}, x) \\
		&= \frac1{n} \nabla_\theta^\top \eta(\hat{\theta}, x) \bigg(\sum_{i=1}^n \left(\EE_{\hat{\DDD}} [\nabla_\theta^2 m(\hat{\theta})]\right)^{-1} \nabla_\theta m(\hat{\theta}, Z_i) \nabla_\theta^\top m(\hat{\theta}, Z_i) \bigg. \\
		&\qquad\qquad\bigg.\left(\EE_{\hat{\DDD}} [\nabla_\theta^2 m(\hat{\theta})]\right)^{-1} \bigg) \nabla_\theta \eta(\hat{\theta}, x) \\
		&= \nabla_\theta^\top \eta(\hat{\theta}, x) \left(\EE_{\hat{\DDD}} [\nabla_\theta^2 m(\hat{\theta})]\right)^{-1} \left(\frac1{n} \sum_{i=1}^n \nabla_\theta m(\hat{\theta}, Z_i) \nabla_\theta^\top m(\hat{\theta}, Z_i) \right) \\
		&\qquad\qquad \left(\EE_{\hat{\DDD}} [\nabla_\theta^2 m(\hat{\theta})]\right)^{-1}  \nabla_\theta \eta(\hat{\theta}, x) \\
		&= \nabla_\theta^\top \eta(\hat{\theta}, x) \left(\EE_{\hat{\DDD}} [\nabla_\theta^2 m(\hat{\theta})]\right)^{-1} \EE_{\hat{\DDD}} [(\nabla_\theta m \nabla_\theta^\top m)(\hat{\theta}, Z)] \left(\EE_{\hat{\DDD}} [\nabla_\theta^2 m(\hat{\theta})]\right)^{-1}  \nabla_\theta \eta(\hat{\theta}, x)
	\end{align*}
	Hence by the strong law of large numbers and that $\hat{\theta} \xrightarrow{\text{a.e.}} \theta_0$ it immediately follows that the IJ variance estimate for M-estimator predictions is consistent. 

	\paragraph{Consistency of the IJ covariance estimate}
	
	Suppose we train an M-estimator $\hat{\theta}$ and a random forest $\hat{F}$ on the same training data - then we look at its predictions at two different points. Suppose these predictions are given by $\eta(\hat{\theta}, x_1)$ and $\hat{F}(x_2)$ respectively. Below we show that the IJ covariance estimate for these predictions are consistent.
	
	\textit{Simplifying the M-estimator prediction}: Note that by Taylor expansion
	$$
	\eta(\hat{\theta}, x_1) = \eta(\theta_0, x_!) + \nabla_\theta^\top \eta(\tilde{\theta}, x_1) (\hat{\theta} - \theta_0), \text{ where } \|\tilde{\theta} - \theta_0\| \leq \|\hat{\theta} - \theta_0\|
	$$
	But $\hat{\theta} \xrightarrow{\text{a.e.}} \theta_0 \implies \tilde{\theta} \xrightarrow{\text{a.e.}} \theta_0$. Also from the proof of the asymptotic normality for $\hat{\theta}$ we saw that 
	\begin{align*}
		\hat{\theta} - \theta_0 &- \left( - \left(\EE_{\DDD} \left[\nabla_\theta^2 m(\theta_0, Z)\right] \right)^{-1} \EE_{\hat{\DDD}}[\nabla_\theta m(\theta_0, Z)] \right) \xrightarrow{\text{a.e.}} 0 \\
		\implies \eta(\hat{\theta}, x_1) &- \left(\eta(\theta_0,x_1) + v_{\DDD}(x_1)^\top \EE_{\hat{\DDD}}[\nabla_\theta m(\theta_0, Z)] \right) \xrightarrow{\text{a.e.}} 0,\\
		\text{ where } &v_{\DDD}(x) = -\nabla_\theta^\top \eta(\theta_0, x) \left(\EE_{\DDD} \left[\nabla_\theta^2 m(\theta_0, Z)\right] \right)^{-1}
	\end{align*}
	
	\textit{Simplifying the random forest prediction}: Now consider the tree kernel $T$ of the random forest and its first Hajek projection given by $\mathring{T}(x; Z_1, \dots, Z_k) = \EE[T] + \sum_{j=1}^k T_1(x; Z_j)$. Then the first Hajek projection of the random forest is given by 
	$$
	\mathring{F}(x_2) = \EE[T] + \frac{k}{n} \sum_{j=1}^n T_1(x_2; Z_j).
	$$
	Now suppose the subsample size of the training data for each tree $k = k_n$ varies with $n$ such that
	\begin{align*}
		\lim_{n \to \infty} &\frac{k_n}{n} \to 0 \text{ and for all query points } x, \lim_{n \to \infty} \frac{k_n \zeta_{1,k_n}(x)}{\zeta_{k_n,k_n}(x)} \neq 0, \\
		\text{ where } \zeta_{c,k}(x) &= cov(T(x; Z_1, \dots, Z_c, Z_{c+1}, \dots, Z_k), T(x; Z_1, \dots, Z_c, Z'_{c+1}, \dots, Z'_k)) \\
		&= var(\EE[T(x; Z_1, \dots, Z_k) \mid Z_1 = z_1, \dots, Z_c = z_c])
	\end{align*}
	Then this condition, alongwith Lemma 13 in \cite{wager2018estimation}, shows that we can replace $\hat{F}$ with $\mathring{F}$ without affecting the variance of the random forest. It can also be easily extended to the covariance of the random forest with any other well-behaved model. Note that this condition was also used in Theorem 1 of \cite{ghosal2020boosting}. We can also use a slightly weaker condition
	$$
	\lim_{n \to \infty} \frac{k_n}{n} \cdot \frac{\zeta_{k_n,k_n}(x)}{k_n \zeta_{1,k_n}(x)} = 0,
	$$
	as seen in Theorem 1 of \cite{peng2022rates}.
	
	\textit{Simplifying the theoretical covariance}: Since $\hat{F}$ is replaceable with $\mathring{F}$, it follows that
	\begin{align*}
		&cov_\DDD(\eta(\hat{\theta}, x_1), \hat{F}(x_2)) \\
		&= cov_\DDD(\eta(\hat{\theta}, x_1), \mathring{F}(x_2)) \\
		&= cov_\DDD\left(v_{\DDD}(x_1)^\top \left[\frac1{n} \sum_{j=1}^n \nabla_\theta m(\theta_0, Z_j)\right], \frac{k}{n} \sum_{j=1}^n T_1(x_2; Z_j)\right) \\
		&= \frac1{n} \cdot \frac{k}{n} \cdot n \cdot cov\left(v_{\DDD}(x_1)^\top \nabla_\theta m(\theta_0, Z_n), T_1(x_2; Z_n)\right) \\
		&= \frac{k}{n} \cdot \EE_{\DDD}[v_{\DDD}(x_1)^\top \nabla_\theta m(\theta_0, Z_n) \cdot T_1(x_2; Z_n)], \;[\text{since } \EE_{\DDD}[T_1] = 0]
	\end{align*}
	
	\textit{Decomposing the M-estimator directional derivative}: Define the empirical version of $v_{\DDD}$ to be $v_{\hat{\DDD}}(x) = -\nabla_\theta^\top \eta(\hat{\theta}, x) \left(\EE_{\hat{\DDD}} \left[\nabla_\theta^2 m(\hat{\theta}, Z)\right] \right)^{-1}$. Then we decompose the directional derivative for $\eta(\hat{\theta}, x_1)$, given by $U_i(x_1) = v_{\hat{\DDD}}(x_1)^\top \nabla_\theta m(\hat{\theta}, Z_i)$ as $U_i(x_1) = B_i(x_1) + S_i(x_1)$, where $B_i(x) = v_{\DDD}(x)^\top \nabla_\theta m(\theta_0, Z_i)$.
	
	\textit{Decomposing the random forest directional derivative}: We know that for a random forest the directional derivatives are given by $U_i'(x_2) = n \cdot cov_b(N_{i,b}, T_b(x_2))$ where the covariance is over $b = 1, \dots, B$ trees, $N_{i,b}$ is the number of times the $i$\textsuperscript{th} training datapoint is in the $b$\textsuperscript{th} tree and $T_b$ is the $b$\textsuperscript{th} tree kernel. Now
	\begin{align*}
		U_i'(x_2) &= n \cdot cov_b(N_{i,b}, T_b(x_2)) \\
		&= n \left( \EE_{V \sim \hat{\DDD}}[\hat{F}(x_2) \mid V_1 = Z_i] - \EE_{V \sim \hat{\DDD}}[\hat{F}(x_2)] \right) \\
		&= k \left( \EE_{V \underset{\sim}{\subset} \hat{\DDD}}[T(x_2) \mid V_1 = Z_i] - \EE_{V \underset{\sim}{\subset} \hat{\DDD}}[T(x_2)] \right) \\
		&= k (A_i(x_2) + R_i(x_2)), \text{ where } A_i(x) = \EE_{V \underset{\sim}{\subset} \hat{\DDD}}[\mathring{T}(x) \mid V_1 = Z_i] - \EE_{V \underset{\sim}{\subset} \hat{\DDD}}[\mathring{T}(x)]
	\end{align*}
	Here $V \underset{\sim}{\subset} \hat{\DDD}$ denotes subsampling (without replacement) from the empirical distribution.
	
	Finally we know the following:
	\begin{align}
		\EE\left[ \frac{k}{n^2} \sum_{i=1}^n B_i(x_1) A_i(x_2) \right] &- cov_\DDD(\eta(\hat{\theta}, x_1), \mathring{F}(x_2)) \xrightarrow{\text{a.e.}} 0 \qquad (\text{Lemma } \ref{lem:covcons} \text{ below}) \label{eqn:covcons}\\
		\frac1{n^2} \sum_{i=1}^n S_i(x_1)^2 &\xrightarrow{p} 0 \qquad (\text{Lemma } \ref{lem:m_est_extra} \text{ below}) \label{eqn:m_est_extra}\\
		\frac{k^2}{n^2} \sum_{i=1}^n R_i(x_2)^2 &\xrightarrow{p} 0 \qquad (\text{Lemma 13, \cite{wager2018estimation}}) \nonumber
	\end{align}
	
	Thus using the Cauchy-Schwartz inequality we see that
	$$
	\EE\left[ \frac{k}{n^2} \sum_{i=1}^n U_i(x_1) U'_i(x_2) \right] - cov_\DDD(\eta(\hat{\theta}, x_1), \mathring{F}(x_2)) \xrightarrow{\text{a.e.}} 0
	$$
	thus proving that the IJ covariance estimate is consistent

	\begin{lem} \label{lem:covcons}
		Equation \eqref{eqn:covcons} holds
	\end{lem}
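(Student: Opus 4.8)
The plan is to reduce the left-hand side to an explicit deterministic expression and read off the discrepancy. The key preliminary step is to evaluate $A_i(x_2)$ in closed form. Since the H\'ajek projection is additive, $\mathring{T}(x_2; V_1, \dots, V_k) = \EE[T] + \sum_{j=1}^k T_1(x_2; V_j)$, and under subsampling without replacement conditioning on $V_1 = Z_i$ fixes the first coordinate while leaving $V_2, \dots, V_k$ marginally uniform over the remaining $n-1$ points, a direct calculation gives
\begin{align*}
A_i(x_2) &= T_1(x_2; Z_i) + \frac{k-1}{n-1}\sum_{\ell \neq i} T_1(x_2; Z_\ell) - k\,\bar{T}_1(x_2) = \frac{n-k}{n-1}\left(T_1(x_2; Z_i) - \bar{T}_1(x_2)\right),
\end{align*}
where $\bar{T}_1(x_2) = \frac1n \sum_{\ell=1}^n T_1(x_2; Z_\ell)$. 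The finite-population factor $\frac{n-k}{n-1}$ is the essential output of this step.

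Writing $b_i = B_i(x_1) = v_{\DDD}(x_1)^\top \nabla_\theta m(\theta_0, Z_i)$ and $t_i = T_1(x_2; Z_i)$, the pairs $(b_i, t_i)$ are i.i.d.\ with $\EE[b_i] = 0$ (since $\EE_{\DDD}[\nabla_\theta m(\theta_0, Z)] = 0$) and $\EE[t_i] = 0$ (since $\EE[T_1] = 0$). I would then substitute the closed form above, take the expectation over the training data, and use $\EE[\bar{b}\,\bar{t}] = \frac1n \EE[b_1 t_1]$ to obtain
\begin{align*}
\EE\left[ \frac{k}{n^2} \sum_{i=1}^n B_i(x_1) A_i(x_2) \right] = \frac{k}{n}\cdot\frac{n-k}{n-1}\cdot\frac{n-1}{n}\,\EE[b_1 t_1] = \frac{k(n-k)}{n^2}\,\EE_{\DDD}\!\left[ v_{\DDD}(x_1)^\top \nabla_\theta m(\theta_0, Z)\, T_1(x_2; Z) \right].
\end{align*}

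The final step is to compare this with the population covariance already computed in the ``Simplifying the theoretical covariance'' step, namely $cov_{\DDD}(\eta(\hat{\theta}, x_1), \mathring{F}(x_2)) = \frac{k}{n}\EE_{\DDD}[ v_{\DDD}(x_1)^\top \nabla_\theta m(\theta_0, Z) T_1(x_2; Z)]$. The two expressions share the same expectation factor and differ only through $\frac{k(n-k)}{n^2} - \frac{k}{n} = -\frac{k^2}{n^2}$, so the difference equals $-\frac{k^2}{n^2}\,\EE_{\DDD}[ v_{\DDD}(x_1)^\top \nabla_\theta m(\theta_0, Z) T_1(x_2; Z)]$, a deterministic quantity for which a.e.\ convergence reduces to ordinary convergence. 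A Cauchy--Schwarz bound gives $|\EE_{\DDD}[\cdots]| \le \sqrt{\EE[(v_{\DDD}(x_1)^\top \nabla_\theta m(\theta_0, Z))^2]\,\zeta_{1,k}(x_2)}$, which stays bounded under the standing regularity assumptions (finite second moment of the score and a bounded tree kernel so that $\zeta_{1,k} \le \zeta_{k,k}$ is bounded). Since $k_n/n \to 0$, the prefactor $(k/n)^2 \to 0$ and the difference vanishes, which is the claim.

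I expect the main obstacle to be the careful bookkeeping in the conditional-expectation step for subsampling without replacement --- in particular keeping track of the $\frac{n-k}{n-1}$ correction and verifying that $\mathring{T}$, defined through the \emph{population} $T_1$, interacts correctly with the \emph{empirical} subsampling that defines $A_i$ --- together with making the moment conditions explicit enough to keep $\EE[b_1 t_1]$ bounded uniformly in $n$ despite $T_1$ itself depending on $k_n$.
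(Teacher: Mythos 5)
Your proof is correct and follows essentially the same route as the paper: your closed form $A_i(x_2) = \frac{n-k}{n-1}\bigl(T_1(x_2;Z_i) - \bar T_1(x_2)\bigr)$ is algebraically identical to the expression $\frac{n-k}{n}\bigl[T_1(x;Z_i) - \frac{1}{n-1}\sum_{j\neq i}T_1(x;Z_j)\bigr]$ that the paper quotes from \cite{wager2018estimation}, and both arguments then reduce the gap to $-\frac{k^2}{n^2}\,\EE_{\DDD}[v_{\DDD}(x_1)^\top\nabla_\theta m(\theta_0,Z)\,T_1(x_2;Z)] \to 0$. The only differences are that you derive the $A_i$ formula from scratch and make the boundedness of the cross-moment explicit via Cauchy--Schwarz, which the paper leaves implicit.
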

	
	\begin{proof}
		We know from \cite{wager2018estimation} that
		$$
		A_i(x) = \frac{n-k}{n} \left[ T_1(x; Z_i) - \frac1{n-1} \sum_{j \neq i} T_1(x; Z_j) \right]
		$$
		Thus
		\begin{align*}
			\EE[B_i(x_1) A_i(x_2)] &= \frac{n-k}{n} \cdot \EE[v_{\DDD}(x_1)^\top \nabla_\theta m(\theta_0, Z_i) \cdot T_1(x_2; Z_i)] \\
			\EE\left[ \frac{k}{n^2} \sum_{i=1}^n B_i(x_1) A_i(x_2) \right] &= \frac{k}{n^2} \cdot n \cdot \frac{n-k}{n} \cdot \EE[v_{\DDD}(x_1)^\top \nabla_\theta m(\theta_0, Z_n) \cdot T_1(x_2; Z_n)] \\
			\implies \EE\left[ \frac{k}{n^2} \sum_{i=1}^n B_i(x_1) A_i(x_2) \right] &- cov_\DDD(\eta(\hat{\theta}, x_1), \mathring{F}(x_2)) \xrightarrow{\text{a.e.}} 0 \qquad \left[\because 1 - \frac{k}{n} \to 1 \right]
		\end{align*}
	\end{proof}
	
	\begin{lem} \label{lem:m_est_extra}
		Equation \eqref{eqn:m_est_extra} holds
	\end{lem}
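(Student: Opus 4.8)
The plan is to exploit the fact that $S_i(x_1) = U_i(x_1) - B_i(x_1)$ is a difference between a plug-in and a population version of the same linear functional of the score, so that each piece vanishes under consistency of $\hat{\theta}$. Writing $a = v_{\hat{\DDD}}(x_1)$ and $b = v_{\DDD}(x_1)$, and abbreviating $g_i = \nabla_\theta m(\hat{\theta}, Z_i)$ and $g_i^0 = \nabla_\theta m(\theta_0, Z_i)$, I would first split
\begin{align*}
S_i(x_1) = a^\top g_i - b^\top g_i^0 = a^\top (g_i - g_i^0) + (a - b)^\top g_i^0,
\end{align*}
and then use $(p+q)^2 \le 2p^2 + 2q^2$ together with Cauchy--Schwarz inside each term to obtain
\begin{align*}
\frac1{n^2} \sum_{i=1}^n S_i(x_1)^2 \le \frac{2\|a\|^2}{n} \cdot \frac1{n} \sum_{i=1}^n \|g_i - g_i^0\|^2 + \frac{2\|a - b\|^2}{n} \cdot \frac1{n} \sum_{i=1}^n \|g_i^0\|^2.
\end{align*}
It then suffices to show each summand is $o_p(1)$, and the reduction already exposes the mechanism: each summand carries both an explicit factor $1/n$ and a factor that collapses under consistency of $\hat{\theta}$.

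For the first summand the key step is a mean-value expansion of the score in $\theta$: for each $i$ there is $\bar{\theta}_i$ on the segment joining $\hat{\theta}$ and $\theta_0$ with $g_i - g_i^0 = \nabla_\theta^2 m(\bar{\theta}_i, Z_i)(\hat{\theta} - \theta_0)$, so that $\|g_i - g_i^0\|^2 \le \|\nabla_\theta^2 m(\bar{\theta}_i, Z_i)\|^2 \|\hat{\theta} - \theta_0\|^2$. On the event $\{\hat{\theta} \in B(\theta_0, \delta)\}$, which has probability tending to $1$ by the consistency established in \cref{sec:MestBasics}, I would dominate each Hessian norm by the envelope $H(Z_i) = \sup_{\theta \in B(\theta_0,\delta)} \|\nabla_\theta^2 m(\theta, Z_i)\|$. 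Assuming $\EE[H(Z)^2] < \infty$, the strong law gives $\frac1n \sum_i H(Z_i)^2 = O_p(1)$, while $\|\hat{\theta} - \theta_0\|^2 \xrightarrow{p} 0$; since also $\|a\|^2 = O_p(1)$ (because $v_{\hat{\DDD}}(x_1) \xrightarrow{p} v_{\DDD}(x_1)$), the first summand equals $\|a\|^2/n$ times an $o_p(1)$ average and therefore vanishes in probability.

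For the second summand, $\frac1n \sum_i \|g_i^0\|^2 \xrightarrow{\text{a.e.}} \EE_{\DDD}\|\nabla_\theta m(\theta_0, Z)\|^2 < \infty$ by the strong law, while $\|a - b\|^2 = \|v_{\hat{\DDD}}(x_1) - v_{\DDD}(x_1)\|^2 \xrightarrow{p} 0$. The latter I would argue exactly as in the asymptotic-normality derivation of \cref{sec:MestBasics}: $\EE_{\hat{\DDD}}[\nabla_\theta^2 m(\hat{\theta}, Z)] \xrightarrow{\text{a.e.}} \EE_{\DDD}[\nabla_\theta^2 m(\theta_0, Z)]$ by the uniform strong law and continuity of $\nabla_\theta^2 m$, and then the continuous mapping theorem applied to the matrix inverse (using invertibility of the limiting Hessian) and to $\nabla_\theta \eta$ gives $v_{\hat{\DDD}}(x_1) \to v_{\DDD}(x_1)$. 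Combined with the extra $1/n$, this summand is $o_p(1)$ as well, which completes the argument.

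I expect the main obstacle to be the uniform control of the Hessian average $\frac1n \sum_i \|\nabla_\theta^2 m(\bar{\theta}_i, Z_i)\|^2$: the intermediate points $\bar{\theta}_i$ are random and differ across $i$, so a pointwise law of large numbers does not apply directly. The cleanest resolution is the domination condition $\EE[\sup_{\theta \in B(\theta_0,\delta)}\|\nabla_\theta^2 m(\theta, Z)\|^2] < \infty$, which is compatible with the standing smoothness assumptions on $m$ and converts the random-argument average into a genuine i.i.d.\ average amenable to the strong law.
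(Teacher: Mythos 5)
Your proposal is correct and follows essentially the same route as the paper's proof: the same add-and-subtract decomposition of $S_i$ (differing only in which factor carries the hat), the bound $(p+q)^2 \le 2p^2 + 2q^2$ with Cauchy--Schwarz, a mean-value expansion of the score in $\theta$, and then consistency of $\hat{\theta}$ and of $v_{\hat{\DDD}}$ together with the (uniform) strong law. Your explicit envelope condition $\EE[\sup_{\theta \in B(\theta_0,\delta)}\|\nabla_\theta^2 m(\theta, Z)\|^2] < \infty$ for handling the $i$-dependent intermediate points is a slightly more careful treatment of a step the paper dispatches by appeal to ``continuity and the uniform strong law of large numbers.''
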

	
	\begin{proof}
		We use $v_{\hat{\DDD}}$ and $v_{\DDD}$ without its argument $x_1$ to reduce notational clutter. Note that $S_i(x_1) = v_{\hat{\DDD}}^\top \nabla_\theta m(\hat{\theta}, Z_i) - v_{\DDD}^\top \nabla_\theta m(\theta_0, Z_i)$. Thus
		\begin{align*}
			&\frac1{n^2} \sum_{i=1}^n S_i(x_1)^2 \\
			&= \frac1{n^2} \sum_{i=1}^n \left(v_{\hat{\DDD}}^\top \nabla_\theta m(\hat{\theta}, Z_i) - v_{\DDD}^\top \nabla_\theta m(\theta_0, Z_i)\right)^2 \\
			&= \frac1{n^2} \sum_{i=1}^n \left( (v_{\hat{\DDD}} - v_{\DDD})^\top \nabla_\theta m(\hat{\theta}, Z_i) - v_{\DDD}^\top \left( \nabla_\theta m(\hat{\theta}, Z_i) - \nabla_\theta m(\theta_0, Z_i) \right) \right)^2 \\
			&= \frac1{n^2} \sum_{i=1}^n 2\left( \left( (v_{\hat{\DDD}} - v_{\DDD})^\top \nabla_\theta m(\hat{\theta}, Z_i) \right)^2 + \left(  v_{\DDD}^\top( \nabla_\theta m(\hat{\theta}, Z_i) - \nabla_\theta m(\theta_0, Z_i) ) \right)^2 \right) \\
			&\leq \frac2{n^2} \sum_{i=1}^n \|v_{\hat{\DDD}} - v_{\DDD}\|^2 \|\nabla_\theta m(\hat{\theta}, Z_i)\|^2 + \frac2{n^2} \sum_{i=1}^n \|v_{\DDD}\|^2 \|\nabla_\theta m(\hat{\theta}, Z_i) - \nabla_\theta m(\theta_0, Z_i) \|^2 \\
			&= \frac{2\|v_{\hat{\DDD}} - v_{\DDD}\|^2}{n} \cdot \EE_{\hat{\DDD}} \left[\|\nabla_\theta m(\hat{\theta}, Z)\|^2 \right] + \frac{2\|v_{\DDD}\|^2}{n^2} \sum_{i=1}^n \|\nabla_\theta^2 m(\tilde{\theta}_i, Z_i) (\hat{\theta} - \theta_0) \|^2 \\
			&\qquad [\text{ where } \|\tilde{\theta}_i - \theta_0\| \leq \|\hat{\theta} - \theta_0\|] \\
			&\leq \frac{2\|v_{\hat{\DDD}} - v_{\DDD}\|^2}{n} \cdot \EE_{\hat{\DDD}} \left[\|\nabla_\theta m(\hat{\theta}, Z)\|^2 \right] + \frac{2\|v_{\DDD}\|^2}{n^2} \sum_{i=1}^n \|\nabla_\theta^2 m(\tilde{\theta}_i, Z_i)\|_1^2 \cdot \|\hat{\theta} - \theta_0\|^2 \\
			&= \frac{2\|v_{\hat{\DDD}} - v_{\DDD}\|^2}{n} \cdot \EE_{\hat{\DDD}} \left[\|\nabla_\theta m(\hat{\theta}, Z)\|^2 \right] + \frac{2\|v_{\DDD}\|^2\cdot \|\hat{\theta} - \theta_0\|^2}{n} \cdot \EE_{\hat{\DDD}} \left[ \|\nabla_\theta^2 m(\tilde{\theta}, Z)\|_1^2 \right]
		\end{align*}
		Now it is easily seen that
		\begin{align*}
			\|v_{\hat{\DDD}} - v_{\DDD}\| &\xrightarrow{p} 0 \\
			\|\hat{\theta} - \theta_0\|^2 &\xrightarrow{p} 0 \\
			\EE_{\hat{\DDD}} \left[\|\nabla_\theta m(\hat{\theta}, Z)\|^2 \right] &\xrightarrow{a.e.} \EE_{\DDD} \left[\|\nabla_\theta m(\theta_0, Z)\|^2 \right] \\
			\EE_{\hat{\DDD}} \left[ \|\nabla_\theta^2 m(\tilde{\theta}, Z)\|_1^2 \right] &\xrightarrow{a.e.} \EE_{\DDD} \left[ \|\nabla_\theta^2 m(\theta_0, Z)\|_1^2 \right]
		\end{align*}
		where the last two limits are due to continuity and the uniform strong law of large numbers. Putting them together we see that $\frac1{n^2} \sum_{i=1}^n S_i(x_1)^2 \xrightarrow{p} 0$.
	\end{proof}
	
	\section{Local linear bias corrections for non-Gaussian responses}
	
	The local linear bias correction proposed in \cite{lu2021unified} and explored in \cref{subsec:modif} will not be suitable to use for responses which are not continuous. But if the response is from a general exponential family then we could use techniques from \cite{ghosal2021generalised} to define a local modification - specifically the idea of looking at squared error from the perspective of log-likelihoods. First note that the $\widehat{Bias}(x)$ defined in \cref{subsec:modif} can be rewritten as
	\begin{align*}
		\widehat{Bias}(x) &= \argmax_b \sum_{k=1}^n w_k(x) \left[- \left(Y_k - (\hat{f}(X_k) + b) \right)^2\right],\\
		w_k(x) &= \sum_{b=1}^B \ind\left\{ Z_k \notin I_b, X_k \in L_b(x) \right\}
	\end{align*}
	Here $w_k(x)$, the out-of-bag weight, is the numerator of $v_k(x)$ as defined in \cref{subsec:modif}, i.e., $v_k(x) = \frac{w_k(x)}{\sum_{\ell=1}^n w_\ell(x)}$. For general responses the goal would be to maximise a similar quantity involving the log-likelihood - suppose we've fitted an initial estimator (an MLE-type estimate or a GLM) $\hat{f}_1$ and then we've fitted a random forest $\hat{f}_2$ based on generalised residuals as defined in \cite{ghosal2021generalised}. Then for a query point $x$ we define the local bias correction to be
	$$
	\widehat{Bias}(x) = \argmax_b \sum_{k=1}^n w_k(x) \ell\left( \hat{f}_1(X_k) + \hat{f}_2(X_k) + b ; Y_k, X_k \right),
	$$
	where $(Y_k, X_k)$ is the $k$\textsuperscript{th} training data and $\ell$ is the log-likelihood function with arguments in the link-space. Then the final link-space prediction for the query point $x$ will be $\hat{f}_1(x) + \hat{f}_2(x) + \widehat{Bias}(x)$. This optimisation step may not always have a closed form solution and could be computationally expensive to implement for each query point separately (compared to the case for continuous responses). Also, theoretically the local bias will behave as an M-estimator so we could use the directional derivatives derived above to quantify uncertainty of this bias - hence enabling construction of confidence intervals and comparison tests with the unmodified predictor $\hat{f}_1 + \hat{f}_2$.
	
\end{document}